\RequirePackage{fix-cm}
\documentclass[11pt,letterpaper,logo,onecolumn,copyright]{microsoft}

\usepackage{nicefrac}
\usepackage{float}
\usepackage{natbib}
\hypersetup{
  pdftitle={Holdout Best-of-N: Unbiased Evaluation and Its Cost},
  pdfauthor={Shrey Shah and Yinheng Li}
}

\newtheorem{theorem}{Theorem}

\newtheorem{proposition}{Proposition}
\newtheorem{corollary}{Corollary}
\newtheorem{definition}{Definition}
\newtheorem{assumption}{Assumption}

\newcommand{\E}{\mathbb{E}}
\newcommand{\Var}{\operatorname{Var}}

\newcommand{\hbon}{\widehat{\Theta}_{\mathrm{H}}}

\title{Holdout Best-of-$N$: Unbiased Evaluation and Its Cost}

\author{Shrey Shah and Yinheng Li}

\begin{document}

\begin{abstract}
Reusing the scores that select a Best-of-$N$ winner can overstate its expected reward. We study evaluation from a fixed matrix of $K$ independent scores per candidate for a policy that selects using $J$ fresh scores. A single estimator based only on this matrix is exactly unbiased for expected judge reward under every independent, stable collection of candidate-specific score laws if and only if $J<K$, for every pool size $M\ge N\ge2$. At $J=K-1$, the selector deepens as $K$ grows. For independent Gaussian scores with common variance and fixed $M\ge N\ge2$, the unbiased minimax risk in this regime is of order $\sigma^2/\sqrt K$, attained by Holdout; allowing bias improves the rate to $\sigma^2/K$. For two candidates, we derive the minimum-variance unbiased estimator at known variance and the sharp asymptotic unbiased minimax constant $1/(\pi\sqrt2)$, which Holdout attains without knowing the variance. The cyclic average over subsets and ties can be computed in $O(MK\log M)$ operations. At fixed selector depth, cyclic evaluation of bounded scores has $O(K^{-1})$ risk uniformly in pool size. The impossibility result concerns the fixed matrix: one additional fresh winner score permits unbiased evaluation of the all-$K$ policy.
\end{abstract}

\maketitle

\section{Introduction}
Best-of-$N$ (BoN) returns the highest-scoring generated candidate \citep{nakano2021webgpt,stiennon2020summarize,lightman2023letsverify}. With a stochastic judge, selection favors unusually high score realizations: the observed maximum can rise with $N$ even when all candidates have the same mean. Independent evaluation avoids reusing selection noise. Judge reward can still disagree with human preferences \citep{gao2023scaling,coste2023reward,eisenstein2023helping}.

We observe $K$ scores for each of $M$ candidates and evaluate a policy that selects from a uniform size-$N$ subset using $J$ fresh scores per candidate. Its value is the selected candidate's expected judge mean. We ask whether this value admits an exactly unbiased estimate from the fixed matrix, and what unbiasedness costs in mean-squared error.

Figure~\ref{fig:method-overview} illustrates column splitting. Holdout uses $J=K-1$ columns for selection and leaves one for evaluation, averaging over held-out columns, candidate subsets, and tied winners. It targets the $(K-1)$-score policy, whose value may differ from the all-$K$ policy.

\begin{figure}[t]
\centering
\includegraphics[width=\linewidth]{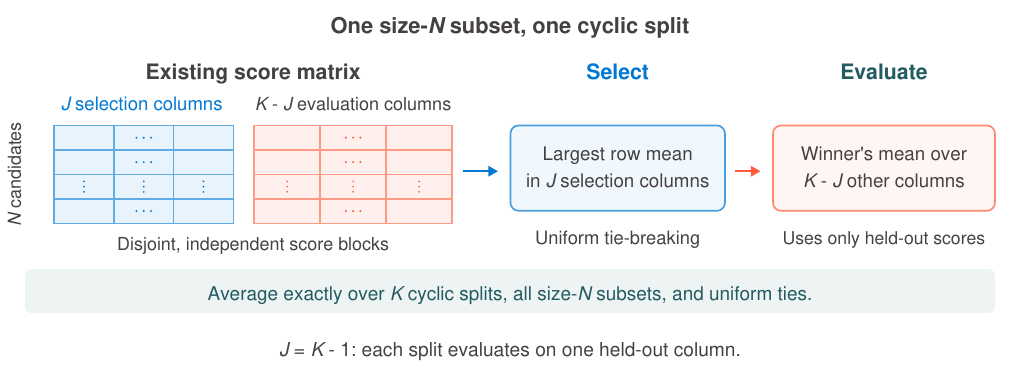}
\caption{Selection and evaluation use disjoint score columns. Selection uses the mean of $J$ columns; evaluation averages the winner's remaining $K-J$ scores. The estimator averages over cyclic splits, all size-$N$ subsets, and uniform ties. Holdout uses $J=K-1$.}
\label{fig:method-overview}
\end{figure}

Over the class of independent, stable candidate-specific score laws, unbiased evaluation from the fixed matrix is possible if and only if $J<K$. The independent-split construction follows classical unbiased functional estimation \citep{halmos1946unbiased,hoeffding1948class}; a polynomial-degree argument for the BoN functional proves necessity. Acquiring a fresh winner score changes the observation design and permits unbiased evaluation of the all-$K$ policy.

For independent Gaussian scores with common variance and fixed $M\ge N\ge2$, the unbiased minimax risk along $J=K-1$ is of order $\sigma^2/\sqrt K$. Holdout attains this rate without knowing the variance; allowing bias gives order $\sigma^2/K$. Here selector depth grows with the observed matrix. With fixed selector depth $J$ and bounded scores, cyclic evaluation has $O(K^{-1})$ risk uniformly in pool size, including discrete and unequal-variance score laws.

Section~\ref{sec:setup} defines the policy and observed data. Section~\ref{sec:method} constructs Holdout and establishes the sample boundary; Section~\ref{sec:gaussian-risk} quantifies the cost of unbiasedness. Section~\ref{sec:practical} covers evaluation budgets, confidence intervals, and dependence. Section~\ref{sec:experiments} reports the synthetic experiments.

\section{Problem setup}
\label{sec:setup}
For each prompt, condition on its realized pool of $M$ candidate responses and any latent variables that determine their score laws; denote this conditioning information by $\mathcal F$. The observed matrix $R\in\mathbb R^{M\times K}$ contains $K$ scores per candidate, and candidate $m$ has judge mean $\mu_m$.

\begin{assumption}[Independent, stable scores]
\label{ass:indep}
Conditional on $\mathcal F$, all $MK$ scores are mutually independent and integrable. For each candidate $m$, the scores $R_{m,1},\ldots,R_{m,K}$ have the same distribution and mean $\mu_m$. Different candidates may have different score distributions.
\end{assumption}

\begin{definition}[$J$-score policy value]
\label{def:achievable}
Draw a size-$N$ subset $S$ uniformly from the $M$ candidates. Within it, select $W_J(S)$ by the largest mean of $J$ fresh, independent scores per candidate, breaking ties uniformly. Define
\[
\Theta^{(J)}(M,N,\boldsymbol\mu,F_\varepsilon)
:=\E[\mu_{W_J(S)}\mid\mathcal F].
\]
Here $F_\varepsilon$ is the collection of candidate-specific centered score laws. The hypothetical draws define the policy; estimation uses only the observed score matrix.
\end{definition}

Increasing $J$ can decrease reward: with two means $0,0.5$ and iid noise uniform on $\{-1,+1\}$, the values are $\Theta^{(1)}=0.375$ and $\Theta^{(2)}=11/32$. For $N=1$, every target is the pool's mean.

\paragraph{From a fixed pool to a generated pool.}
For exchangeable candidate/score-law pairs conditional on a prompt, a uniform size-$N$ subset has the same joint law as the first $N$ candidates. Averaging $\Theta^{(J)}$ over generated pools therefore gives the population value of that selector. This includes iid generation. A fixed four-plus-four mixture of two generation policies differs from an iid single-policy pool; a production generator with a different joint law need not have the same value. Inference uses complete prompt/pool units (Section~\ref{sec:inference}).

\section{The Holdout Best-of-$N$ estimator}
\label{sec:method}
For $K=3$ and $J=2$, choose a winner from a fixed candidate subset using columns 1 and 2, then evaluate it with column 3. Repeat using columns 2 and 3 to select and column 1 to evaluate, then columns 3 and 1 to select and column 2 to evaluate. Average the three evaluations. Each winner is chosen without its evaluation column. Averaging over candidate subsets and uniform tie choices gives Holdout at three columns; the cyclic estimator below allows any $1\le J<K$.

Fix $1\le J<K$. For each of the $K$ cyclic blocks $I_t$ of $J$ columns, form a selection mean $a_{m,t}$ and an evaluation mean $v_{m,t}$ from its complement:
\[
I_t=\{1+((t+r)\bmod K):r=0,\ldots,J-1\},\qquad
a_{m,t}=\frac1J\sum_{k\in I_t}R_{m,k},\quad
v_{m,t}=\frac1{K-J}\sum_{k\notin I_t}R_{m,k}.
\]
Here $t=0,\ldots,K-1$.

\paragraph{Subset and tie averaging.}
An ascending rank $i$ wins $\binom{i-1}{N-1}$ of the $\binom MN$ subsets, so its weight is $w_i=\binom{i-1}{N-1}/\binom MN$, zero for $i<N$. Average these weights within each tied block. Writing $q_{m,t}$ for this weight gives
\begin{equation}
\label{eq:fixed-J-main}
\widehat\Theta_{J,K}=\frac1K\sum_{t=0}^{K-1}\sum_{m=1}^M q_{m,t}v_{m,t}.
\end{equation}
The weighted sum is the exact average over subsets and uniformly chosen tied winners. Rolling column sums and one sort per block compute this average in $O(MK\log M)$ arithmetic operations. The subset weights extend the binary pass@$k$ counting identity \citep{chen2021codex}; Appendix~\ref{app:computation} derives the weights and computational cost.

\paragraph{Holdout Best-of-$N$.}
At $J=K-1$, each split leaves one column for evaluation. Write $h_m^{(-k)}=(K-1)^{-1}\sum_{j\ne k}R_{m,j}$, let $\sigma^{(k)}$ sort these keys, and let $\tilde w_i^{(k)}$ average $w_i$ within ties. Equation~\ref{eq:fixed-J-main} becomes
\begin{equation}
\label{eq:holdout}
\hbon(R;N)=\frac1K\sum_{k=1}^K\sum_{i=1}^M\tilde w_i^{(k)}R_{\sigma^{(k)}(i),k}.
\end{equation}
For discrete scores, tied weights must be averaged; stable sorting alone favors the index order (Appendix~\ref{app:computation}).

\begin{theorem}[Unbiased evaluation]
\label{thm:unbiased}
Under Assumption~\ref{ass:indep}, for every $1\le J<K$,
$\E[\widehat\Theta_{J,K}\mid\mathcal F]=\Theta^{(J)}$.
In particular, $\E[\hbon\mid\mathcal F]=\Theta^{(K-1)}$.
\end{theorem}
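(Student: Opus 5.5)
By linearity over the $K$ cyclic blocks, it suffices to show that for each fixed $t$,
\[
\E\Bigl[\sum_m q_{m,t}v_{m,t}\Bigm|\mathcal F\Bigr]=\Theta^{(J)}.
\]
Fix $t$ and write $A=(a_{m,t})_m$ and $V=(v_{m,t})_m$. The selection keys $A$ are built from columns in $I_t$, and the evaluation means $V$ from columns outside $I_t$. Under Assumption~\ref{ass:indep} these two column sets are disjoint, so $A$ and $V$ are independent given $\mathcal F$. The weights $q_{m,t}$ are measurable functions of $A$ alone. Each $v_{m,t}$ averages $K-J\ge1$ integrable scores of candidate $m$, so $\E[v_{m,t}\mid\mathcal F,A]=\mu_m$. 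The tower property then gives
\[
\E\Bigl[\sum_m q_{m,t}v_{m,t}\Bigm|\mathcal F\Bigr]=\E\Bigl[\sum_m q_{m,t}\mu_m\Bigm|\mathcal F\Bigr].
\]
This is the only place $J<K$ is used.

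Next I will identify the weights combinatorially. Let $S$ be a uniform size-$N$ subset, drawn independently of $R$, and let $W(S)$ be the maximizer of $a_{\cdot,t}$ over $S$, with ties broken uniformly. The claim is that $q_{m,t}=\Pr(W(S)=m\mid A)$.

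First suppose there are no ties. A candidate of ascending rank $i$ wins exactly the subsets that contain it together with $N-1$ of the $i-1$ lower-ranked candidates. That gives probability $\binom{i-1}{N-1}/\binom MN=w_i$.

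Now suppose there are ties. Take a tied block occupying ranks $\ell+1,\dots,\ell+b$. By symmetry every member of the block has the same probability of being the eventual winner. The total winning probability of the block does not depend on how ties are ordered: the winner lies in the block exactly when the subset's maximum rank, under any fixed ordering of the tie, falls in $\{\ell+1,\dots,\ell+b\}$. That total is $\sum_{i=\ell+1}^{\ell+b}w_i$. So each member gets the block average of the $w_i$, which is $q_{m,t}$.

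This tie-averaging argument is the delicate step, and I would write it out carefully. It is the step that rules out stable-sort index bias, and so underpins the formula (Appendix~\ref{app:computation}).

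Hence
\[
\sum_m q_{m,t}\mu_m=\E\bigl[\mu_{W(S)}\bigm|\mathcal F,A\bigr].
\]
It remains to check that $W(S)$ has the same conditional law as $W_J(S)$ in Definition~\ref{def:achievable}. Each $a_{m,t}$ is the mean of $J$ independent scores of candidate $m$ with that candidate's law. These are independent across candidates and, by stability, identically distributed to $J$ fresh scores. So $A$ has exactly the law of the fresh selection means, and $S$ and the tie-breaking are drawn identically in both. Taking expectations over $A$ gives $\Theta^{(J)}$ for each $t$, and averaging over $t$ proves the theorem.

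For the Holdout statement, set $J=K-1$. Block $t$ then evaluates on the single left-out column, so equation~\eqref{eq:fixed-J-main} reduces to \eqref{eq:holdout}, with the relabeling of $t$ to the column index $k$. Therefore $\E[\hbon\mid\mathcal F]=\Theta^{(K-1)}$.
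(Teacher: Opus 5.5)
Your proof is correct and follows the paper's argument. Conditioning on block $t$'s selection keys fixes the weights and gives the held-out means conditional mean $\mu_m$. The keys have the law of the fresh $J$-score selector, and linearity handles the dependent blocks. Your explicit identification of $q_{m,t}$ as the subset-and-tie winning probability is what the paper establishes separately in Appendix~\ref{app:computation}. You use within-block symmetry plus an order-invariant block total, where the paper uses a uniform random ordering of the tied block; both are valid.
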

\begin{proof}
Conditional on the selection keys $a_t$ and $\mathcal F$, each evaluation mean has conditional mean $\mu_m$ and the weights are fixed. The block's expected contribution is therefore $\sum_mq_{m,t}\mu_m$. Its keys have the joint law of the intended independent $J$-score selector, so expectation over them gives $\Theta^{(J)}$. Linearity of expectation preserves this value when averaging over the possibly dependent block estimates.
\end{proof}

\subsection{The sample boundary}
\label{sec:frontier}
\begin{theorem}[Estimability frontier]
\label{thm:frontier}
Fix $K\ge2$ and any $M\ge N\ge 2$. Over the distribution-free class of independent, stable score laws (Assumption~\ref{ass:indep}, with laws allowed to differ across candidates), consider a single statistic $T(R;M,N,J,K)$, integrable under every law in this class, with no candidate means or score-law parameters supplied as side information:
\begin{enumerate}
  \item[(a)] \textbf{Achievability.} For every integer $1\le J<K$, $\Theta^{(J)}(M,N,\cdot)$ admits an exactly unbiased estimator from the $M\times K$ sample matrix. At $J=K-1$, Holdout BoN is one (Theorem~\ref{thm:unbiased}).
  \item[(b)] \textbf{Impossibility.} For every $J\ge K$, $\Theta^{(J)}(M,N,\cdot)$ admits no exactly unbiased estimator that is a function of the $M\times K$ sample matrix.
\end{enumerate}
Thus, for integer $J\ge1$, $\Theta^{(J)}$ is distribution-free exactly unbiasedly estimable from this matrix if and only if $J<K$. The threshold specifies the selector sample count for exact unbiased estimability. Reward $\Theta^{(J)}$ can decrease with $J$.
\end{theorem}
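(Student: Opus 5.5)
Part~(a) follows directly from Theorem~\ref{thm:unbiased}. For each integer $1\le J<K$, the cyclic estimator $\widehat\Theta_{J,K}$ in~\eqref{eq:fixed-J-main} is a function of $R$ alone, and it is integrable because it is a bounded-weight linear combination of integrable scores. Its conditional mean is $\Theta^{(J)}$, and at $J=K-1$ it is Holdout. The substance of the theorem is therefore part~(b), which I would prove by a polynomial-degree argument on a one-parameter Bernoulli subfamily.

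\emph{Restriction to a subfamily.} An estimator that is unbiased over the whole distribution-free class must in particular be unbiased on any subfamily. I would take the following one, which satisfies Assumption~\ref{ass:indep}:
\begin{itemize}
\item candidate~$1$ has iid $\mathrm{Bernoulli}(p)$ scores, with $p\in[0,1]$;
\item every other candidate has scores identically $0$;
\item all scores are independent.
\end{itemize}

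\emph{The expectation of any statistic has degree at most $K$.} Let $T$ be any statistic of $R$. Under this subfamily $R$ has finite support, so $T$ is automatically integrable. The only random row is row~$1$, so
\[
\E_p[T]=\sum_{x\in\{0,1\}^K}T(x,0,\ldots,0)\,p^{|x|}(1-p)^{K-|x|}.
\]
This is a polynomial in $p$ of degree at most $K$.

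\emph{The target has degree exactly $J+1$.} I would compute $\Theta^{(J)}$ exactly on the subfamily. A uniform size-$N$ subset contains candidate~$1$ with probability $N/M$; otherwise the winner has mean $0$. When candidate~$1$ is in the subset:
\begin{itemize}
\item if at least one of its $J$ fresh scores equals~$1$, it wins outright;
\item if all $J$ scores are $0$, which has probability $(1-p)^J$, it ties with the other $N-1$ members at key $0$ and wins with probability $1/N$ under uniform tie-breaking.
\end{itemize}
Hence
\[
\Theta^{(J)}(p)=\frac NM\,p\Bigl[1-\bigl(1-\tfrac1N\bigr)(1-p)^J\Bigr].
\]
This is a polynomial of degree $J+1$. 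Its leading coefficient is $\pm\frac NM\bigl(1-\frac1N\bigr)$, which is nonzero precisely because $N\ge2$. This is also where $N=1$ correctly escapes, since then every target is the pool mean.

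\emph{Conclusion.} If $J\ge K$, then $J+1>K$. Unbiasedness would require two polynomials, one of degree at most $K$ and one of degree exactly $J+1$, to agree for all $p\in[0,1]$. Since $[0,1]$ is infinite, they would be identical as polynomials, which is a contradiction. Combining (a) and (b) gives the ``if and only if'' statement. The remark that reward can decrease in $J$ is supplied by the example following Definition~\ref{def:achievable}.

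The main obstacle is choosing a subfamily on which the top coefficient of $\Theta^{(J)}$ provably does not vanish. Symmetric choices, such as several Bernoulli candidates, lead to messy cancellations. Making every other candidate degenerate at $0$ is what turns the tie-breaking probability $1/N$ into an explicit nonzero leading coefficient.
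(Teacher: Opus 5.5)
Your proposal is correct and uses the paper's key idea: on a Bernoulli restriction, $\E_p[T]$ is a polynomial of degree at most $K$, while the target has exact degree $J+1$. In fact, the paper's slice $\mu_2=0$ of its two-candidate family is exactly your family at $M=N=2$.

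The only difference is how you reach general $M\ge N\ge2$. The paper keeps two Bernoulli candidates and appends $M-2$ constants $L<0$ that never win, so the top term carries weight $\binom{M-2}{N-2}/\binom MN$. Your $M-1$ zero-constant competitors instead tie with the all-zero key, which gives the top coefficient $(-1)^{J+1}(N-1)/M$ in a single univariate computation. This is a slightly shorter route to the same conclusion.
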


\begin{proof}[Proof sketch]
For $M=N=2$ with Bernoulli judges, the expectation of a statistic based on $K$ samples has degree at most $K$ in each mean. The selector target has degree exactly $J+1$ in each mean, ruling out $J\ge K$. Adding $M-2$ constant, lower-scoring candidates preserves this higher-degree term with positive coefficient $\binom{M-2}{N-2}/\binom MN$. Appendix~\ref{app:frontier} proves these claims. For $J<K$, selection on $J$ columns and independent evaluation on the remaining $K-J$ columns gives an unbiased estimator; cyclic averaging preserves it (Theorem~\ref{thm:unbiased}).
\end{proof}

The same boundary holds for two candidates with known common Gaussian noise: no estimator integrable at every mean pair is unbiased for $\Theta^{(J)}$ when $J\ge K$ (Proposition~\ref{prop:gaussian-frontier}).

\section{The cost of exact unbiasedness}
At $J=K-1$, increasing $K$ also deepens the selector; at fixed $J$, it provides more data to evaluate the same policy. We first quantify Gaussian risk at the boundary, then give fixed-depth bounds for bounded scores, including discrete and heteroscedastic laws.
\label{sec:gaussian-risk}
\begin{theorem}[Gaussian risk cost for any fixed pool and subset size]
\label{thm:general-gaussian-risk}
Fix $M\ge N\ge2$. Observe independent
$R_{m,k}\sim\mathcal N(\mu_m,\sigma^2)$, with $K\ge2$ and common $\sigma>0$,
and take $J=K-1$.
Write $L_K(U;\boldsymbol\mu,\sigma)=
\E_{\boldsymbol\mu,\sigma}[(U-\Theta^{(K-1)})^2]$.
For fixed known $\sigma$, let $\mathcal U_K(\sigma)$ contain the statistics
integrable and unbiased for this target at every $\boldsymbol\mu\in\mathbb R^M$.
As $K\to\infty$,
\[
\inf_{U\in\mathcal U_K(\sigma)}\sup_{\boldsymbol\mu}L_K(U;\boldsymbol\mu,\sigma)
\asymp_{M,N}
\sup_{\boldsymbol\mu}L_K(\hbon;\boldsymbol\mu,\sigma)
\asymp_{M,N}\frac{\sigma^2}{\sqrt K}.
\]
If bias is allowed, the minimax rate is $\sigma^2/K$.
For unknown $\sigma$, the same orders, $K^{-1/2}$ and $K^{-1}$, hold after
normalizing the loss by $\sigma^2$ and taking the supremum over all
$\boldsymbol\mu\in\mathbb R^M,\sigma>0$; the unbiased class then requires
unbiasedness at every such pair without the true $\sigma$ as an input.
Holdout attains the unbiased order without a variance input or estimate.
The comparison constants may depend on $M,N$.
\end{theorem}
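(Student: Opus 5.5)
The argument splits into four parts: an upper bound for Holdout, a biased upper bound and parametric lower bound, the unbiased lower bound $\sigma^2/\sqrt K$, and the extension to unknown $\sigma$.

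\textbf{Upper bound for Holdout.} By scale and location equivariance we may take $\sigma=1$. Write $\hbon-\Theta^{(K-1)}$ as the average over $k$ of the centered block terms $D_k=\sum_i\tilde w_i^{(k)}R_{\sigma^{(k)}(i),k}-\Theta^{(K-1)}$. Each $D_k$ has variance $O(1)$ uniformly in $\boldsymbol\mu$. The evaluation noise contributes $O(1)$, since the weights are bounded. The conditional-mean part $\sum_m q_{m,k}\mu_m-\Theta^{(K-1)}$ is also $O(1)$: it is nonzero only when candidates have means within $O(1/\sqrt K)$ of one another, and then the spread of $\mu$ over the contested candidates is small.

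Averaging over the $K$ folds, the evaluation-noise parts are orthogonal across folds and contribute $O(1/K)$. The selection parts are highly correlated, because folds share $K-2$ of their $K-1$ selection columns. For each candidate pair, the relevant quantity is the indicator that one leave-one-out mean exceeds another, multiplied by the gap $\mu_a-\mu_b$. For the averaged indicator we would use a jackknife/Efron--Stein bound. Its variance is at most $K$ times the expected squared change from resampling one column. That change is nonzero only when the key difference lies within $O(1/K)$ of zero, an event of density-bounded probability $O(\sqrt K\cdot K^{-1})=O(K^{-1/2})$ after scaling.

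This gives a variance of order $K^{-1/2}$ for the averaged indicator. Multiplying by $\Delta^2$ with $\Delta=\mu_a-\mu_b$, and using that the indicator only fluctuates when $\Delta\lesssim K^{-1/2}$, we would obtain $\sup_{\boldsymbol\mu}\lesssim \Delta^2\cdot(\text{flip variance}\approx 1/\sqrt K\ \text{ at }\Delta\sim K^{-1/2}\cdot$ rescaled$)$. The precise bookkeeping should be organized in the standardized gap $\delta=\sqrt K\Delta$, where it yields $\lesssim K^{-1/2}$ uniformly. This is the upper bound.

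\textbf{Biased rate.} The plug-in estimator that replaces $\boldsymbol\mu$ by the sample means $\bar R_m$ in the smooth map $\boldsymbol\mu\mapsto\Theta^{(K-1)}$ has risk $O(1/K)$. The gradient of this map has norm $O(1)$ uniformly, with the Lipschitz constant checked via the standardized-gap representation. A two-point Le Cam argument, perturbing one mean by $K^{-1/2}$, changes $\Theta$ by order $K^{-1/2}$ and gives the matching lower bound $1/K$.

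\textbf{Unbiased lower bound.} We reduce to $M=N=2$. Adding $M-2$ very low candidates preserves a positive multiple of the two-candidate target, exactly as in the sketch of Theorem~\ref{thm:frontier}. Since a lower bound at fixed known $\sigma$ suffices, $\bar R_1,\bar R_2$ are complete sufficient statistics. By Lehmann--Scheff\'e the UMVUE is the unique unbiased function of $(\bar R_1,\bar R_2)$, and it depends only on the difference $d=\bar R_1-\bar R_2$ through the $\Phi$-type term, together with a linear part.

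The target is $\Theta=\mu_2+\Delta\,\Phi(\Delta\sqrt{(K-1)/2})$. The UMVUE of $\Delta\Phi(c\Delta)$ from $d\sim\mathcal N(\Delta,2/K)$ requires deconvolving a Gaussian whose variance $c^{-2}=2/(K-1)$ nearly equals the noise variance $2/K$. The deconvolution kernel has variance $2/(K-1)-2/K\approx 2/K^2$, so the estimator is $d$ times a steep $\Phi$ of width $\asymp 1/K$. Its variance at $\Delta=0$ is computed explicitly: $\E[d^2\,\text{(sign-like)}^2]$ minus the linear parts leaves a term of order $\sqrt{1/K}\cdot(1/\sqrt K)\cdot\sqrt K$. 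The calculation should give $\asymp K^{-1/2}$, consistent with the constant $1/(\pi\sqrt2)$ announced in the abstract.

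This explicit variance computation of the near-singular deconvolution is the main obstacle. We would attempt it through the Hermite/heat-equation representation $\E_\Delta[g(d)]=(g*\phi_{2/K})(\Delta)$, solving the backward heat equation in closed form.

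\textbf{Unknown $\sigma$.} Any estimator unbiased for all $(\boldsymbol\mu,\sigma)$ is in particular unbiased at each fixed $\sigma$, so the known-$\sigma$ lower bounds transfer. The normalized upper bounds for Holdout and the plug-in estimator are scale invariant, so they also carry over.
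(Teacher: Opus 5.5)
\textbf{The Holdout upper bound.} Your upper bound for Holdout rests on a false step, and it puts the $K^{-1/2}$ term in the wrong place. The held-out noise terms are not orthogonal across folds. Block $k$ selects using column $j$ and block $j$ selects using column $k$, so $\E[\varepsilon_{W_k,k}\varepsilon_{W_j,j}]\ne0$.

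For $M=N=2$ at equal means, this cross-moment equals $\sigma^2/(\pi\sqrt{2K-3})$. The $K(K-1)$ ordered pairs, divided by $K^2$, contribute $\sigma^2(K-1)/(\pi K\sqrt{2K-3})\sim\sigma^2/(\pi\sqrt{2K})$. That is the entire leading term of $\mathcal H_K$ in Proposition~\ref{prop:gaussian-holdout-risk}.

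Conversely, the signal part you treat with Efron--Stein is only $O(\sigma^2/K)$. In each block the regret satisfies $0\le\Delta_W\le(e_W-e_{m_\star})_+$, so $\E[\Delta_W^2]\le(N-1)\sigma^2/(K-1)$, and Jensen across blocks finishes. Your Efron--Stein estimate is itself off: at $\Delta=0$ the fold-averaged indicator has variance close to one, and $K$ times an $O(K^{-1/2})$ flip probability is $O(K^{1/2})$, not $O(K^{-1/2})$.

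The missing idea is a uniform bound on the cross-fold noise correlation. The paper obtains it as follows:
\begin{itemize}
\item It conditions on the shared columns $T_m=\sum_{k\ge3}R_{m,k}$ and factorizes $\E[\varepsilon_{W_1,1}\varepsilon_{W_2,2}\mid T]=\sum_{m,n}C_{mn}C_{nm}$.
\item It uses the symmetry and zero row sums of $C$ to get $\|C\|_F^2\le N\sum_{m\ne n}C_{mn}^2$.
\item It bounds $|C_{mn}|\le\sigma^2\int f_mf_n$ by dropping the CDF factors.
\item It averages over $\nu_m-\nu_n\sim\mathcal N((K-1)(\mu_m-\mu_n),2(K-2)\sigma^2)$ to obtain the factor $(2K-3)^{-1/2}$.
\item Jensen over subsets then extends the bound from $M=N$ to $M>N$.
\end{itemize}

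\textbf{The reduction to two candidates.} This is also not ``exactly as'' in the Bernoulli frontier. With common-variance Gaussian scores, you cannot give the extra $M-2$ candidates constant scores. At means $-t$ they still beat an active candidate with positive probability.

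On this slice, Rao--Blackwellizing $U$ onto $(\bar R_1,\bar R_2)$ gives an unbiased estimator of $f_t(x,y)=\Theta^{(K-1)}(x,y,-t,\ldots,-t)$. It is not unbiased for $c\,\Theta_2^{(K-1)}(x,y)+a(x+y)$ plus a constant. Here $c$ and $a$ are the probabilities that a uniform size-$N$ subset contains both designated candidates, or a given one of them alone. So the two-candidate UMVU bound does not apply directly. The pointwise-small discrepancy cannot simply be dropped either, because the size of a target does not control the variance of its unbiased estimators.

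The paper closes this with a limiting argument:
\begin{itemize}
\item Along $t_i\to\infty$, the centered Rao--Blackwellized functions are bounded in $L^2(P_0)$.
\item A weak subsequential limit stays unbiased for $c\,\Theta_2^{(K-1)}+a(x+y)$, because the likelihood ratios lie in $L^2(P_0)$.
\item Completeness identifies this limit as $cT_{K-1,K}+a(X_1+X_2)$.
\item Weak lower semicontinuity of the norm then yields the lower bound $B_{M,N,K}\ge c^2\mathcal R_{K-1,K}$.
\end{itemize}
The near-singular variance computation you call the main obstacle is, by contrast, already supplied by Theorem~\ref{thm:gaussian-risk}.

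\textbf{The unknown-variance case.} Your biased plug-in $\Theta^{(K-1)}(\bar R)$ depends on $\sigma$, so scale invariance does not carry it over to the unknown-variance problem. You need either a $\sigma$-free estimator such as $\E_S[\max_{m\in S}\bar R_m]$, or an explicit variance-estimation step. The paper uses the former and controls its bias by the selector-regret bound $\sigma\sqrt{2\log N/(K-1)}$.
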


The lower bound uses two tied best candidates with the other means
receding below them. It gives a worst-case rate for each fixed $M,N$;
the risk when all $M$ means tie and the rate for growing pools remain separate questions.
Appendix~\ref{app:general-gaussian-risk} gives finite-$K$ bounds.

\subsection{Sharp constants for two candidates}
Assume $M=N=2$ and independent $R_{m,k}\sim\mathcal N(\mu_m,\sigma^2)$, with known $\sigma>0$. Put $S=(\bar R_1+\bar R_2)/2$, $D=\bar R_1-\bar R_2$, and $\tau^2=2\sigma^2/K$. For $d=\mu_1-\mu_2$, the target is
\[
\Theta^{(J)}=\frac{\mu_1+\mu_2}{2}+\frac d2\operatorname{erf}\!\left(\frac{\sqrt J\,d}{2\sigma}\right),
\qquad \operatorname{erf}(z)=\frac2{\sqrt\pi}\int_0^z e^{-u^2}du.
\]

\begin{theorem}[Gaussian unbiased risk]
\label{thm:gaussian-risk}
For $1\le J<K$, define $b=\sqrt{JK}/(2\sigma\sqrt{K-J})$. The estimator
\begin{equation}
\label{eq:gaussian-umvu}
T_{J,K}=S+\frac D2\operatorname{erf}(bD)
-\frac{\tau^2b}{\sqrt\pi}e^{-b^2D^2}
\end{equation}
is unbiased for every mean pair and has minimum variance at each pair among estimators unbiased throughout this Gaussian model with finite variance. At $\mu_1=\mu_2$, let $r=J/K$; its mean-squared error is
\begin{equation}
\label{eq:gaussian-tie-risk}
\mathcal R_{J,K}
=\frac{\sigma^2}{K}\left[\frac12+
\frac1\pi\left(\arcsin r+\frac r{\sqrt{1-r^2}}\right)\right].
\end{equation}
Consequently, at equal means, every estimator unbiased for all mean pairs has MSE at least $\mathcal R_{J,K}$. At $J=K-1$, this minimum is asymptotic to $\sigma^2/(\pi\sqrt{2K})$; for fixed $J$, it is asymptotic to $\sigma^2/(2K)$.
\end{theorem}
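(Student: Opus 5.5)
The plan is to reduce the problem to the one-parameter family of $D$, use completeness of the Gaussian model, and then carry out an explicit Gaussian computation at $d=0$.

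\textbf{Reduction and unbiasedness.} With $\sigma$ known, $(S,D)$ is a complete sufficient statistic for $(\mu_1,\mu_2)$. The two coordinates are independent, with $S\sim\mathcal N((\mu_1+\mu_2)/2,\sigma^2/(2K))$ and $D\sim\mathcal N(d,\tau^2)$. By Lehmann--Scheff\'e, any unbiased function of $(S,D)$ with finite variance is UMVU at every pair. The $S$ part is trivially unbiased for $(\mu_1+\mu_2)/2$. It therefore remains to check that $g(D)=\frac D2\operatorname{erf}(bD)-\frac{\tau^2 b}{\sqrt\pi}e^{-b^2D^2}$ satisfies $\E_d g(D)=\frac d2\operatorname{erf}(\sqrt J d/(2\sigma))$.

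To verify this, I would use Stein's identity $\E[Dh(D)]=d\,\E h(D)+\tau^2\E h'(D)$ with $h(D)=\operatorname{erf}(bD)/2$. Since $h'(D)=\frac{b}{\sqrt\pi}e^{-b^2D^2}$, the correction term in $g$ cancels exactly the $\tau^2\E h'$ contribution. Hence $\E g(D)=\frac d2\E\operatorname{erf}(bD)$. Next comes the standard convolution $\E\operatorname{erf}(bD)=\operatorname{erf}\!\bigl(bd/\sqrt{1+2b^2\tau^2}\bigr)$. Substituting $b^2=JK/(4\sigma^2(K-J))$ and $\tau^2=2\sigma^2/K$ gives $1+2b^2\tau^2=K/(K-J)$. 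So $bd/\sqrt{1+2b^2\tau^2}=\sqrt J d/(2\sigma)$, as required. Finite variance is clear, because $g$ grows at most linearly.

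\textbf{Tie risk.} At $d=0$ the target is $(\mu_1+\mu_2)/2$, and since $S\perp D$ the MSE equals $\sigma^2/(2K)+\E g(D)^2$ with $D\sim\mathcal N(0,\tau^2)$. Expanding $g^2$ gives three pieces.
\begin{itemize}
\item $\frac14\E[D^2\operatorname{erf}^2(bD)]$. Here I would write $\operatorname{erf}^2$ via the orthant-probability formula $\E[\operatorname{erf}(aX)\operatorname{erf}(aY)]=\frac2\pi\arcsin(\cdot)$, or differentiate $\E\operatorname{erf}^2(cX)=\frac2\pi\arcsin\frac{2c^2}{1+2c^2}$ in the variance parameter.
\item The cross term $-\frac{\tau^2b}{\sqrt\pi}\E[D\operatorname{erf}(bD)e^{-b^2D^2}]$. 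This is a Gaussian integral after absorbing $e^{-b^2D^2}$ into the density, followed by the same erf expectation formula.
\item $\frac{\tau^4b^2}{\pi}\E e^{-2b^2D^2}=\frac{\tau^4b^2}{\pi\sqrt{1+4b^2\tau^2}}$.
\end{itemize}
With $2b^2\tau^2=r/(1-r)$ one gets $\frac{2c^2}{1+2c^2}$-type ratios equal to $r$. The terms should collapse to $\frac{\sigma^2}{K}\cdot\frac1\pi\bigl(\arcsin r+r/\sqrt{1-r^2}\bigr)$. \textbf{This algebraic bookkeeping is the main obstacle.} I would organise it by defining $\phi(s)=\E\operatorname{erf}^2(\sqrt s\,Z)=\frac2\pi\arcsin\frac{2s}{1+2s}$ and expressing each term through $\phi$ and $\phi'$ (Gaussian integration by parts). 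The $\arcsin r$ comes from $\phi$, and the $r/\sqrt{1-r^2}$ term from $\phi'$ combined with the cross and square terms.

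\textbf{Consequences and asymptotics.} The lower bound for every unbiased estimator at equal means follows directly. Any unbiased $U$ with finite MSE has Rao--Blackwellisation $\E[U\mid S,D]$, which by completeness equals $T_{J,K}$, and conditioning only reduces the MSE. An estimator with infinite MSE satisfies the bound trivially. For the asymptotics, take $J=K-1$, so $r=1-1/K$ and $\sqrt{1-r^2}\sim\sqrt{2/K}$. The dominant term is then $\frac{\sigma^2}{K}\cdot\frac1\pi\sqrt{K/2}=\sigma^2/(\pi\sqrt{2K})$. For fixed $J$, $r\to0$, so the bracket tends to $1/2$, giving $\sigma^2/(2K)$.
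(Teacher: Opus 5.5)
Your proposal is correct and follows essentially the paper's route: Gaussian integration by parts (Stein's identity) plus the erf convolution for unbiasedness, completeness of the row means for minimum variance and the equal-means lower bound, and a tied-mean computation built on $\E[\operatorname{erf}^2(qX)]=(2/\pi)\arcsin r$, with $X\sim\mathcal N(0,1)$ and $q=b\tau$. For the bookkeeping you flag, the variance-derivative part of $\tfrac14\E[D^2\operatorname{erf}^2(bD)]$ cancels the cross term exactly, leaving $\E[g(D)^2]=\tau^2\bigl(\tfrac14\E[\operatorname{erf}^2(qX)]+q^2/(\pi\sqrt{1+4q^2})\bigr)$, which equals $(\sigma^2/(\pi K))(\arcsin r+r/\sqrt{1-r^2})$.
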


The final term in Equation~\ref{eq:gaussian-umvu} corrects the bias of the preceding nonlinear term. As $J/K$ approaches one, it concentrates near tied observed means. Appendix~\ref{app:gaussian-risk} proves unbiasedness and minimum variance using Gaussian expectations and complete sufficient row means. Appendix~\ref{app:numerical} gives a stable numerical form.

At equal means, the biased estimator $\max(\bar R_1,\bar R_2)$ has MSE $\sigma^2/K$ for the same policy value. Along $J=K-1$, it has lower MSE than every globally unbiased estimator for sufficiently large $K$ (Figure~\ref{fig:gaussian-risk}). This pointwise comparison leaves the ordering at other mean pairs open.

\begin{proposition}[Risk of diagonal Holdout]
\label{prop:gaussian-holdout-risk}
Let $M=N=2$, $K\ge2$, and $R_{m,k}\sim\mathcal N(\mu_m,\sigma^2)$ independently, with common $\sigma>0$. Holdout uses no value or estimate of $\sigma$. At equal means, its MSE for $\Theta^{(K-1)}$ is
\begin{equation}
\label{eq:gaussian-holdout-risk}
\mathcal H_K=\frac{\sigma^2}{K}
+\frac{\sigma^2(K-1)}{\pi K\sqrt{2K-3}}.
\end{equation}
Use the loss and unbiased class from Theorem~\ref{thm:general-gaussian-risk}, restricted to $M=N=2$. As $K\to\infty$,
\begin{equation}
\label{eq:gaussian-unbiased-minimax}
\sup_{\boldsymbol\mu} L_K(\hbon;\boldsymbol\mu,\sigma)
\sim \inf_{U\in\mathcal U_K(\sigma)}\sup_{\boldsymbol\mu}L_K(U;\boldsymbol\mu,\sigma)
\sim\frac{\sigma^2}{\pi\sqrt{2K}}.
\end{equation}
For estimators unbiased at every $(\boldsymbol\mu,\sigma)$ without using $\sigma$, the minimax normalized risk $L_K/\sigma^2$, with supremum over all mean pairs and $\sigma>0$, is asymptotic to $1/(\pi\sqrt{2K})$, again attained by Holdout to first order. Allowing bias gives minimax rate $\sigma^2/K$ at fixed $\sigma$, or $1/K$ for the normalized unknown-variance loss.
\end{proposition}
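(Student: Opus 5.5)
We work with $M=N=2$ and $J=K-1$, and write $X_k=R_{1,k}-R_{2,k}\sim\mathcal N(d,2\sigma^2)$ and $Y_k=(R_{1,k}+R_{2,k})/2$. These are independent, and $Y$ is independent of $X$. In this notation Holdout is
\[
\hbon=\bar Y+\frac1K\sum_{k=1}^K\frac{X_k}{2}\bigl(2\cdot\mathbf 1\{D^{(-k)}>0\}-1\bigr),
\]
where $D^{(-k)}$ is the mean of the $X_j$ with $j\ne k$; ties have probability zero. The sign term does not involve $\sigma$, so no variance input is used. At $d=0$ the target equals $\Theta^{(K-1)}=(\mu_1+\mu_2)/2=\E\bar Y$.

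\textbf{Exact tie risk.} The error is $(\bar Y-\E\bar Y)+\frac1{2K}\sum_k X_k s_k$ with $s_k=\operatorname{sign}(D^{(-k)})$, and the two parts are independent with mean zero. The first part contributes $\sigma^2/(2K)$. For the second part the diagonal terms give $\frac1{4K^2}\cdot K\cdot 2\sigma^2=\sigma^2/(2K)$, because $X_k$ is independent of $s_k$. For an off-diagonal pair $k\ne l$ we need $\E[X_kX_l s_ks_l]$. Put $Z=\sum_{j\ne k,l}X_j$, so that $s_k=\operatorname{sign}(X_l+Z)$ and $s_l=\operatorname{sign}(X_k+Z)$. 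Conditioning on $Z$ gives $\E[X_k\operatorname{sign}(X_k+Z)\mid Z]=2\sigma^2\phi_{\sqrt2\sigma}(Z)\cdot 2$ by Stein's identity, and the same holds for $X_l$ by symmetry. The pair term is therefore $4\sigma^4\,\E[\phi_{\sqrt2\sigma}(Z)^2]\cdot4$ with $Z\sim\mathcal N(0,2\sigma^2(K-2))$. This is an explicit Gaussian integral of order $\sigma^2/\sqrt{2K-3}$. Multiplying by $K(K-1)/(4K^2)$ should produce exactly the term $\sigma^2(K-1)/(\pi K\sqrt{2K-3})$ in \eqref{eq:gaussian-holdout-risk}. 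The main bookkeeping risk in the whole proof is getting these constants right.

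\textbf{Supremum over mean pairs.} The hardest step is showing that $\sup_d L_K(\hbon)\sim\sigma^2/(\pi\sqrt{2K})$, that is, that the tie is asymptotically the worst case. Write the risk as $\sigma^2/(2K)$ plus the variance of $\frac1{2K}\sum_k X_k s_k$. The diagonal terms are always at most $\sigma^2/(2K)+d^2/(4K)\cdot(\cdot)$; more carefully, we bound the variance by the covariance sum. The off-diagonal covariance $\operatorname{Cov}(X_ks_k,X_ls_l)$ is handled by the same Stein computation with $Z\sim\mathcal N((K-2)d,2\sigma^2(K-2))$. That yields terms of the form $\sigma^4\phi^2$ evaluated away from the mean, together with cross terms of the form $d\,\sigma^2\phi\cdot P(\cdot)$ and $d^2\operatorname{Cov}$ of signs. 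For the sign covariances we use the fact that two leave-one-out signs disagree only when $|Z|\lesssim|X|$. This yields bounds of order $(1+\sqrt K|d|/\sigma)^{2}e^{-cKd^2/\sigma^2}\sigma^2/\sqrt K$ after multiplication by $d^2$. Maximising over $u=\sqrt K d/\sigma$ shows that every term except the tie-type $\phi^2$ term is $O(\sigma^2/K)$. The $\phi^2$ term is maximised at $d=0$. Together these give the upper bound $\sigma^2/(\pi\sqrt{2K})(1+o(1))$. If the sign-covariance bound turns out to be too lossy, the fallback is to compute the exact bivariate-normal orthant formulas for $\operatorname{Cov}(s_k,s_l)$ and bound their $d$-dependence directly.

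\textbf{Minimax lower bound and the rest.} By Theorem~\ref{thm:gaussian-risk}, every $U\in\mathcal U_K(\sigma)$ has risk at $d=0$ at least $\mathcal R_{K-1,K}\sim\sigma^2/(\pi\sqrt{2K})$. Hence $\inf_U\sup_{\boldsymbol\mu}L_K\ge\mathcal R_{K-1,K}$, and the two matching asymptotics give \eqref{eq:gaussian-unbiased-minimax}. For unknown $\sigma$, any estimator unbiased at every $(\boldsymbol\mu,\sigma)$ is in particular unbiased at each fixed $\sigma$. The lower bound therefore holds for each $\sigma$, and after normalising by $\sigma^2$ it is uniform in $\sigma$. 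Holdout's normalised risk is scale invariant, so the upper bound is unchanged. For the biased rate, the upper bound comes from $\max(\bar R_1,\bar R_2)$, or from the plug-in $\bar Y+\frac{\bar D}{2}\operatorname{erf}(\sqrt{K-1}\bar D/2\sigma)$, each of risk $O(\sigma^2/K)$ uniformly in $d$. The matching $\sigma^2/K$ lower bound follows from a two-point (Le Cam) argument that perturbs $\mu_1+\mu_2$ by $\sigma/\sqrt K$ while keeping $d=0$.
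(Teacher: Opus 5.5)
Your tie-risk computation is the paper's argument in unstandardized form: condition on the leave-two-out sum, apply Stein's identity to each factor, and integrate $\phi^2$. Your constants do give $4\sigma^2/(\pi\sqrt{2K-3})$ per ordered pair, and hence $\mathcal H_K$. The unbiased lower bound via $\mathcal R_{K-1,K}$ and the unknown-variance reduction also match the paper. Your Le Cam two-point bound for the biased rate is an equally valid substitute for the paper's flat-prior Bayes bound.

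You differ from the paper in the supremum over $d$. The paper centres the differences and writes $\hbon=\bar A+B+V$, with $B=\tfrac d2(\bar S-m)$ and $V$ built from centred $Z_k$. It computes $\E V^2$ exactly at every $d$; the only $d$-dependence is a factor $\exp(-(K-1)^2\delta^2/(2K-3))\le1$. It bounds $\Var(B)=O(\sigma^2/K)$ by Jensen and a Gaussian tail, then combines the two by Minkowski, which costs a remainder $O(\sigma^2K^{-3/4})$.

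You instead expand $\Var\bigl(\tfrac1{2K}\sum_kX_ks_k\bigr)$ exactly, with uncentred $X_k$, into four pieces: a diagonal term, the $\phi^2$ term, a cross term $8\sigma^2d\,\E[\phi_{\sqrt2\sigma}(d+Z)(2\Phi(\cdot)-1)]$, and $d^2\operatorname{Cov}(s_k,s_l)$. This takes more computation but, done correctly, is sharper. The cross term is at most $8\sigma^2|d|$ times the $\mathcal N((K-1)d,2\sigma^2(K-1))$ density at $0$, hence $O(\sigma^2/K)$ uniformly. You would therefore obtain $\sup_{\boldsymbol\mu}L_K(\hbon)=\mathcal H_K+O(\sigma^2/K)$.

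The step you describe for $d^2\operatorname{Cov}(s_k,s_l)$ does not work as written. The two leave-one-out signs share $K-2$ of their $K-1$ summands, so they are strongly positively correlated: $\operatorname{Cov}(s_k,s_l)=(1-m^2)-2\Pr(s_k\ne s_l)$ stays close to $1-m^2$. A bound on the disagreement probability controls only the subtracted term, so it cannot bound the covariance from above. Moreover, the envelope you state, $(1+u)^2e^{-cu^2}\sigma^2/\sqrt K$, maximises to order $\sigma^2/\sqrt K$. That is the same order as the leading term, so it would not yield the constant $1/(\pi\sqrt2)$, and it contradicts your conclusion that the term is $O(\sigma^2/K)$.

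No orthant formulas are needed. Use $\operatorname{Cov}(s_k,s_l)\le\Var(s_k)=1-m^2=4\Phi(t)\Phi(-t)$ with $t=|d|\sqrt{K-1}/(\sigma\sqrt2)$. Then $d^2\operatorname{Cov}(s_k,s_l)\le 2d^2e^{-(K-1)d^2/(4\sigma^2)}\le 8\sigma^2/(e(K-1))$, the same estimate the paper uses for $\Var(B)$. With this replacement your argument closes.

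Two minor points remain. For the normalized unknown-variance biased rate, only $\max(\bar R_1,\bar R_2)$ qualifies, since your erf plug-in uses $\sigma$. Its uniform bias bound $0\le\max(\mu_1,\mu_2)-\Theta^{(K-1)}\le\sigma/\sqrt{2e(K-1)}$ should be proved rather than asserted.
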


Appendix~\ref{app:gaussian-holdout-risk} combines a uniform cross-moment bound with Theorem~\ref{thm:gaussian-risk} to prove the sharp constant; Appendix~\ref{app:gaussian-minimax} proves the unrestricted rate. Finite-$K$ minimaxity of Holdout and its exact worst-case mean gap remain unresolved by these results.

\begin{figure}[t]
\centering
\includegraphics[width=0.84\linewidth]{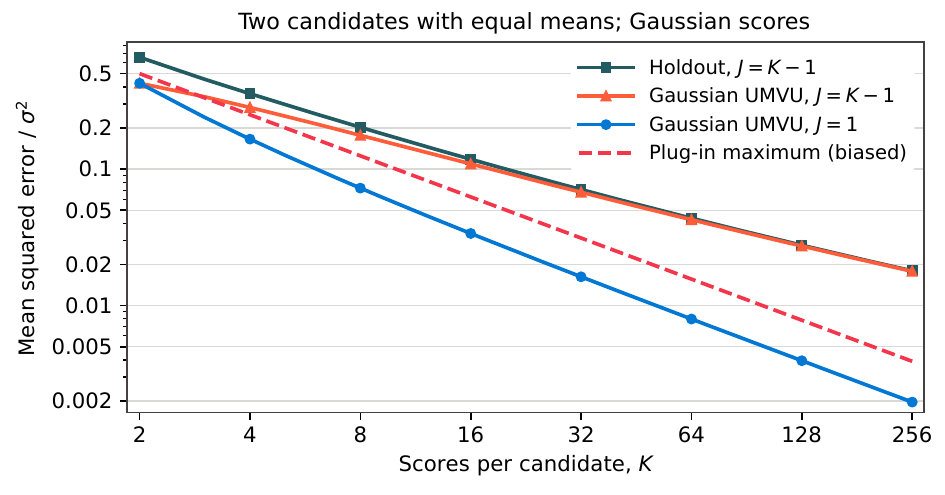}
\caption{Exact MSE divided by $\sigma^2$ for two independent Gaussian candidates at equal means. Holdout uses no variance input; the other unbiased curves assume known variance. The $J=1$ and $J=K-1$ policies have the same value here.}

\label{fig:gaussian-risk}
\end{figure}

\subsection{Selector depth and bounded scores}

\begin{corollary}[Selector depth and tied-mean risk]
\label{cor:gaussian-depth-regimes}
In the known-variance Gaussian pair model of Theorem~\ref{thm:gaussian-risk},
put $C(r)=\tfrac12+\pi^{-1}(\arcsin r+r/\sqrt{1-r^2})$.
The minimum unbiased risk at equal means satisfies
\[
\mathcal R_{J,K}\sim
\begin{cases}
\sigma^2/(2K),&J\text{ fixed},\\
\sigma^2C(r_0)/K,&J/K\to r_0\in[0,1),\\
\sigma^2/(\pi\sqrt{2Ks_K}),&s_K=K-J=o(K).
\end{cases}
\]
Thus bounded positive slack $s_K=O(1)$ gives order $K^{-1/2}$; for fixed
$s\ge1$, its leading constant is $1/(\pi\sqrt{2s})$.
These formulas give pointwise risk at equal means.
\end{corollary}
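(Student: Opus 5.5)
The plan is to start from the closed form \eqref{eq:gaussian-tie-risk} of Theorem~\ref{thm:gaussian-risk}, which gives $\mathcal R_{J,K}=\sigma^2C(r)/K$ exactly with $r=J/K$. The corollary then becomes a statement about how $C(r)$ behaves along each sequence $r=r_K=J/K$. Since $\mathcal R_{J,K}$ is already the minimum MSE at equal means over all globally unbiased estimators, nothing beyond asymptotics of the elementary function $C$ is needed.

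\textbf{Bounded-away regimes.} For the first two cases, $C$ is continuous on $[0,1)$ with $C(0)=\tfrac12$ (both $\arcsin 0$ and $0/\sqrt{1}$ vanish). If $J$ is fixed, then $r_K=J/K\to0$, so $C(r_K)\to\tfrac12$ and $\mathcal R_{J,K}\sim\sigma^2/(2K)$. If $J/K\to r_0\in[0,1)$, continuity gives $C(r_K)\to C(r_0)>0$, hence $\mathcal R_{J,K}\sim\sigma^2C(r_0)/K$. The fixed-$J$ case is the special case $r_0=0$.

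\textbf{Vanishing slack.} This is the only step requiring care. With $s_K=K-J=o(K)$ and $s_K\ge1$, write $1-r=s_K/K\to0$. Then $1-r^2=(1-r)(1+r)=(s_K/K)(2-s_K/K)\sim 2s_K/K$, so
\[
\frac{r}{\sqrt{1-r^2}}\sim\sqrt{\frac{K}{2s_K}}\to\infty .
\]
The remaining terms $\tfrac12+\pi^{-1}\arcsin r$ stay bounded (they tend to $1$). Hence $C(r_K)\sim\pi^{-1}\sqrt{K/(2s_K)}$, and multiplying by $\sigma^2/K$ gives
\[
\mathcal R_{J,K}\sim\frac{\sigma^2}{\pi\sqrt{2Ks_K}}.
\]
For the consequences: if $s_K=O(1)$ with $s_K\ge1$, then $\sqrt{2Ks_K}\asymp\sqrt K$, so the order is $K^{-1/2}$. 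For fixed $s$ the leading constant is $1/(\pi\sqrt{2s})$ in front of $\sigma^2/\sqrt K$, and $s=1$ recovers the $J=K-1$ statement of Theorem~\ref{thm:gaussian-risk}.

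The main obstacle is minor: checking that the divergent term dominates in the third regime, which holds because $K/s_K\to\infty$. One should also note that these asymptotics concern only the pointwise tied-mean risk, which is what the corollary claims.
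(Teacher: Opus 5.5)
Your proposal is correct and follows the paper's own route. Both read $\mathcal R_{J,K}=\sigma^2C(J/K)/K$ off Equation~\eqref{eq:gaussian-tie-risk}, use continuity of $C$ on $[0,1)$ for the first two regimes, and, when $s_K=o(K)$, show that $r/\sqrt{1-r^2}\sim\sqrt{K/(2s_K)}$ dominates the bounded remainder. The paper additionally records the expansions $\arcsin r+r/\sqrt{1-r^2}=2r+O(r^3)$ and $C(1-\delta)=1/(\pi\sqrt{2\delta})+1+O(\sqrt\delta)$, which give explicit second-order terms, but your leading-order argument is all the stated equivalences require.
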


If $J/K$ stays bounded away from one, the Gaussian pair's minimum tied-mean
risk is $O(\sigma^2/K)$. Increasing $K$ at fixed $J$ evaluates one policy;
increasing $J$ also changes the policy. A uniform fixed-depth guarantee
holds beyond Gaussian scores when scores are bounded.

\begin{proposition}[Fixed-depth evaluation with bounded scores]
\label{prop:bounded-fixed-depth-risk}
Under Assumption~\ref{ass:indep}, suppose scores lie in $[a,a+B]$, $B>0$.
For any $M\ge N\ge2$ and $1\le J<K$, let
$\widehat\Theta^{\rm all}_{J,K}$ average the independent selection/evaluation
split over all $\binom KJ$ selection sets. Both it and the cyclic estimator
are unbiased for $\Theta^{(J)}$, and
\begin{align}
\label{eq:bounded-fixed-depth-risk}
\operatorname{MSE}(\widehat\Theta^{\rm all}_{J,K})
&\le\frac{(J+1)B^2}{4K},&
\operatorname{MSE}(\widehat\Theta_{J,K})
&\le\frac{B^2}{4}\min\left\{1,\frac{(J+1)^2}{K}\right\}.
\end{align}
The cyclic bound is uniform over $M,N$ and the candidate laws.
For fixed $M,N,J$, the globally unbiased minimax risk over this bounded
class has order $K^{-1}$: it lies between $B^2/(4MK)$ and
$(J+1)B^2/(4K)$.
\end{proposition}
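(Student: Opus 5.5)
The plan is to treat the all-subsets estimator as a U-statistic in the columns, to bound the cyclic estimator by a bounded-differences (Efron--Stein) argument, and to get the lower bound from a one-parameter Bernoulli submodel.

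\textbf{Unbiasedness.} This follows exactly as in Theorem~\ref{thm:unbiased}. For each selection set $I$, conditional on the selection keys the weights are fixed, and the complementary evaluation mean has conditional mean $\mu_m$. Averaging over any family of selection sets, cyclic or all $\binom KJ$, preserves the value $\Theta^{(J)}$ by linearity. Since both estimators are unbiased, MSE equals variance, and it remains to bound variances.

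\textbf{All-subsets estimator.} Under Assumption~\ref{ass:indep} the columns $C_k=(R_{1,k},\ldots,R_{M,k})$ are i.i.d.\ random vectors. For a selection set $I$ and a column $k\notin I$, write $X_{I,k}=\sum_m q_m(I)R_{m,k}$, where $q(I)$ are the subset/tie weights computed from the keys on $I$. Then
\[
\widehat\Theta^{\rm all}_{J,K}=\binom KJ^{-1}\sum_I\frac1{K-J}\sum_{k\notin I}X_{I,k}.
\]
Each pair $(I,k)$ corresponds to the $(J+1)$-set $A=I\cup\{k\}$ with one distinguished element. Counting shows every $(J+1)$-set receives equal total weight. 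Hence
\[
\widehat\Theta^{\rm all}_{J,K}=\binom K{J+1}^{-1}\sum_{|A|=J+1}h(C_A),
\]
where the symmetric kernel is $h(C_A)=(J+1)^{-1}\sum_{k\in A}X_{A\setminus\{k\},k}$. This is a U-statistic of degree $J+1$. Because the weights $q_m$ are nonnegative and sum to one, $h\in[a,a+B]$, so $\Var h\le B^2/4$. Hoeffding's classical bound $\Var U\le\frac{J+1}{K}\Var h$ then gives $(J+1)B^2/(4K)$.

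\textbf{Cyclic estimator.} Since it is an average of block estimates each lying in $[a,a+B]$, its variance is at most $B^2/4$. For the second bound I use bounded differences. Write
\[
\widehat\Theta_{J,K}=\frac1{K(K-J)}\sum_t\sum_{k\notin I_t}X_{I_t,k}.
\]
Consider changing a single column $C_j$.
\begin{itemize}
\item The $J$ blocks with $j\in I_t$ can change completely, moving the estimator by at most $JB/K$ in total.
\item In each of the other $K-J$ blocks only the single term $k=j$ changes, by at most $B/(K(K-J))$; together these move the estimator by at most $B/K$.
\end{itemize}
So the range of $\widehat\Theta_{J,K}$ as a function of $C_j$, with the other columns fixed, is at most $c_j=(J+1)B/K$. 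Efron--Stein, combined with the fact that a variable of range $c$ has variance at most $c^2/4$, gives
\[
\Var\le\sum_j c_j^2/4=(J+1)^2B^2/(4K).
\]
The constant is free of $M$ and $N$, which yields uniformity. I expect this step to be the main one: the care lies in counting exactly which terms depend on column $j$, so that the range bound is not inflated by the $1/(K-J)$ normalization.

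\textbf{Minimax order.} The upper bound is the all-subsets estimator. For the lower bound, restrict to the submodel in which every candidate has the same scaled Bernoulli law $a+B\,\mathrm{Bern}(p)$. There every selector picks a candidate with mean $a+Bp$, so $\Theta^{(J)}=a+Bp$. Any globally unbiased estimator is then unbiased for $a+Bp$ from $MK$ i.i.d.\ Bernoulli draws. The Cram\'er--Rao bound (or directly the UMVU sample mean) gives variance at least $B^2p(1-p)/(MK)$, which equals $B^2/(4MK)$ at $p=1/2$.
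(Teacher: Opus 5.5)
Your proposal is correct and follows the paper's proof essentially step for step. It uses the degree-$(J+1)$ U-statistic representation with Hoeffding's $\frac{J+1}{K}\Var(h)$ bound for the all-subsets estimator, the $(J+1)B/K$ single-column replacement range with Efron--Stein for the cyclic estimator, and the common scaled-Bernoulli submodel for the $B^2/(4MK)$ lower bound; the one cosmetic difference is that you cite Cram\'er--Rao or Lehmann--Scheff\'e, whereas the paper obtains $\operatorname{Cov}_p(U,T)=Bp(1-p)$ by differentiating unbiasedness and then applies Cauchy--Schwarz, which is the same inequality.
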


Appendix~\ref{app:c7-depth-risk} proves the complete-average bound using its order-$(J+1)$ U-statistic structure \citep{hoeffding1948class}, and the cyclic bound using columnwise bounded differences. The cyclic bound has conservative constants and leaves minimum-variance optimality unresolved.

\section{Practical evaluation}
\label{sec:practical}
\subsection{Choosing an observation design}
Fix the deployed selector's $J$. If $J<K$ and a score matrix is available, cyclic splitting gives unbiased evaluation under Assumption~\ref{ass:indep}. For unbiased all-$K$ evaluation, keep $J=K$ and acquire a fresh winner score. With new acquisition, draw a uniform size-$N$ subset from the same pool, use $NJ$ selection calls, and score the winner once more: $NJ+1$ score calls. Unbiasedness requires that score's conditional mean to equal the winner's judge mean. This design retains subset and selection variation.

At budget $B$, use an independent pilot to compare designs for the same policy and target. With per-unit cost $c$, population variance $V$, and bias $b_0$, averaging about $B/c$ independent units has approximate MSE $b_0^2+cV/B$. Among unbiased designs, prefer smaller pilot estimates of $cV$, including generation and prompt overhead in $c$. A biased plug-in estimator also needs a justified estimate or bound for $b_0$.

\subsection{Population uncertainty}
\label{sec:inference}
For $T\ge2$ iid complete prompt/pool units, compute one estimate $H_p=\widehat\Theta_{J,K}^{(p)}$ per unit, keeping the policy and scoring protocol fixed. Under Assumption~\ref{ass:indep} and $\E[H_p^2]<\infty$, $\bar H=T^{-1}\sum_pH_p$ is unbiased for $\theta_{\rm pop}=\E_p[\Theta_p^{(J)}]$. With positive total variance, an asymptotic $1-\alpha$ interval is
\[
\bar H\ \pm\ z_{1-\alpha/2}\frac{s_H}{\sqrt T},
\qquad s_H^2=\frac1{T-1}\sum_{p=1}^T(H_p-\bar H)^2.
\]
Here $z_{1-\alpha/2}$ is the standard-normal quantile. The sample variance estimates
$\Var_p(\Theta_p^{(J)})+\E_p[\Var(H_p\mid\mathcal F_p)]$, including pool generation and judge noise. Iterated expectation and total variance give these moments; the iid central limit theorem gives the interval's asymptotic coverage. If every score lies in $[a,b]$, then $H_p$ also lies in that interval, and Hoeffding's inequality gives a finite-sample interval with half-width $(b-a)\sqrt{\log(2/\alpha)/(2T)}$.

Folds, candidates, and score calls within one pool cannot serve as independent prompt/pool replicates. When several pools share a sampled prompt, average within each prompt using prespecified nonnegative weights summing to one, then treat the independent prompt averages as the units. For a paired policy comparison, apply the variance formula to per-prompt differences. Resample whole units.

\subsection{Sensitivity to dependent scores}
\label{sec:dependence}
Fix the pool and its intended independent-score target $\Theta^{(J)}$. The observed matrix may now have any integrable joint law. Write $q_{m,t}=q_m(a_t)$ for candidate $m$'s subset-and-tie-averaged selection weight, and define
\[
d_{m,t}(a_t)=\E[v_{m,t}\mid\mathcal F,a_t]-\mu_m,
\quad \bar p_m=\frac1K\sum_t\E[q_{m,t}\mid\mathcal F],
\quad p_m^J=\Pr\{W_J=m\mid\mathcal F\}.
\]
Conditioning on the keys gives the exact identity
\[
\E[\widehat\Theta_{J,K}\mid\mathcal F]-\Theta^{(J)}
=\underbrace{\frac1K\sum_t\E\!\left[\sum_mq_{m,t}d_{m,t}\mid\mathcal F\right]}_{\text{evaluation conditional means}}
+\underbrace{\sum_m\mu_m(\bar p_m-p_m^J)}_{\text{selector law}}.
\]
Consequently the absolute bias is at most $C+D\operatorname{TV}(\bar p,p^J)$, where
$C=K^{-1}\sum_t\E[\max_m|d_{m,t}|\mid\mathcal F]$,
$D=\max_m\mu_m-\min_m\mu_m$, and
$\operatorname{TV}(p,q)=\tfrac12\sum_m|p_m-q_m|$.
The first bound uses the simplex weights; the second uses the range-$D$ total-variation inequality. An observed correlation or a single score matrix cannot certify these sensitivity parameters.

For example, consider $R_{m,k}=\mu_m+\sigma(\sqrt\rho\,U_m+\sqrt{1-\rho}\,Z_{m,k})$, where $0\le\rho<1$ and $U,Z$ are independent standard normals drawn afresh for each matrix, independently of $\mathcal F$. Selection keys have variance $\sigma^2[\rho+(1-\rho)/J]$ and
\[
d_{m,t}(a_t)=\frac{J\rho}{1+(J-1)\rho}(a_{m,t}-\mu_m).
\]
This dependence changes both evaluation conditional means and the selection noise law while preserving each score's marginal mean and variance. A common additive shift $b_k$ shared by every candidate in column $k$ preserves rankings and changes cyclic evaluation by exactly $K^{-1}\sum_k b_k$, so zero-average common shifts cancel. Both properties can fail under arbitrary candidate-specific drift.

\section{Experiments}
\label{sec:experiments}
\label{sec:designs}
All experiments use synthetic scores with known laws and targets to check the implementation and compare evaluation designs.

\subsection{Independent-score checks}
Five fixed Bernoulli pools with exact binomial targets test tie handling; independent Gaussian controls at $M=16,N=4$ compare reuse and Holdout at the same selector depth. All five Bernoulli and four Gaussian Holdout differences lie within $1.81$ matching Monte Carlo standard errors of zero. Appendix~\ref{app:numerical} gives the protocols, data, and code.

\subsection{Error and cost for a fixed selector}
For fixed $J=1$, we use $M=N=8$, iid $\mathrm{Unif}(0.2,0.8)^8$ mean pools, Gaussian score SD $0.6$, and $20{,}000$ pools per $K$. Cyclic evaluation uses the full matrix of $8K$ scores. The fixed split selects with the first column and evaluates the winner on the remaining $K-1$ columns; acquiring scores only for these steps would cost $8+K-1$ calls. Costs exclude generation and prompt overhead, which can change the budget ranking. Cyclic evaluation has lower pool-specific RMSE; the fixed split has lower total population variance at equal score-call budget (Table~\ref{tab:one-score}).

\begin{table}[ht]
\centering
\caption{Full-matrix cyclic and adaptive fixed-split evaluation of the one-score target $\Theta^{(1)}$. The last column compares population variance at equal score-call budget, excluding generation and prompt overhead; values below one favor the fixed split.}
\label{tab:one-score}
\begin{tabular}{r|rrr|r}
\toprule
& \multicolumn{3}{c|}{RMSE against $\Theta^{(1)}$} & $c_FV_F/(c_HV_H)$ \\
$K$ & Cyclic split ($H$) & Fixed split ($F$) & Score reuse & \\
\midrule
$4$ & $0.240$ & $0.379$ & $0.845$ & $0.824$ \\
$8$ & $0.138$ & $0.271$ & $0.831$ & $0.793$ \\
\bottomrule
\end{tabular}
\end{table}

\subsection{Finite-sample comparison beyond two candidates}
\label{sec:finite-experiment}
We compare Holdout and the biased row-mean comparator $P=\E_S[\max_{m\in S}\bar R_m]$ against the same $\Theta^{(K-1)}$. Table~\ref{tab:general-finite-risk} shows the prespecified $K=16$ slice of a $27$-cell Gaussian study with variance one. The mean vectors are all zero, $(0,0,-1,-2,\ldots)$, and $(0,-1,-2,\ldots)$. Appendix~\ref{app:numerical} gives the full grid, simulation protocol, target-integration checks, and saved outputs.

\begin{table}[H]
\centering
\caption{MSE for $\Theta^{(15)}$ with fixed means, $K=16$, and $\sigma=1$. Parentheses give the Monte Carlo standard error of the paired difference $\Delta=\mathrm{MSE}(H)-\mathrm{MSE}(P)$; positive $\Delta$ favors $P$.}
\label{tab:general-finite-risk}
\begin{tabular}{rrlrrr}
\toprule
$M$ & $N$ & Means & Holdout $H$ & Plug-in $P$ & $\Delta$ (MCSE) \\
\midrule
$4$ & $2$ & All tied & $0.0366$ & $0.0391$ & $-0.00255\;(0.00026)$ \\
$4$ & $2$ & Two tied best & $0.0262$ & $0.0244$ & $+0.00180\;(0.00006)$ \\
$4$ & $2$ & Separated & $0.0247$ & $0.0243$ & $+0.00034\;(0.00002)$ \\
$8$ & $4$ & All tied & $0.0452$ & $0.0798$ & $-0.03453\;(0.00039)$ \\
$8$ & $4$ & Two tied best & $0.0250$ & $0.0223$ & $+0.00265\;(0.00008)$ \\
$8$ & $4$ & Separated & $0.0226$ & $0.0222$ & $+0.00032\;(0.00002)$ \\
$8$ & $8$ & All tied & $0.1821$ & $0.1496$ & $+0.03252\;(0.00098)$ \\
$8$ & $8$ & Two tied best & $0.1181$ & $0.0626$ & $+0.05542\;(0.00065)$ \\
$8$ & $8$ & Separated & $0.0669$ & $0.0623$ & $+0.00469\;(0.00026)$ \\
\bottomrule
\end{tabular}
\end{table}

At $K=16$ with all means tied, Holdout has lower MSE for $(M,N)=(4,2)$ and $(8,4)$, while the plug-in is better for $(8,8)$. The plug-in is better in all nine cells with two tied best candidates. The full grid shows reversals with depth for some all-tied pools and agreement to numerical precision on the saved $K=64$ draws for separated means. Finite-sample rankings depend on the mean vector, subset size, and depth.

\subsection{Dependence stress test}
\label{sec:dependence-experiment}
We use the equicorrelated model from Section~\ref{sec:dependence} with $M=16,N=4,K=8,\sigma=0.4$ and $40{,}000$ iid $\operatorname{Beta}(2,2)^{16}$ mean pools, sharing primitive shocks across $\rho\in\{0,0.1,0.4,0.8\}$. Scoring the realized dependent selections by their true means gives $B_{\rm actual}$; doing the same for independent-law selections gives $B_{\rm independent}$. The paired differences $H-B_{\rm actual}$ and $B_{\rm actual}-B_{\rm independent}$ estimate evaluation and selector-law bias against the independent-score target. They sum to $H-B_{\rm independent}$ in each pool.

\begin{table}[ht]
\centering
\caption{Bias against the independent-score target at each $\rho$. Entries are paired mean differences over $40{,}000$ pools, with Monte Carlo standard errors in parentheses.}
\label{tab:dependence}
\begin{tabular}{rrrr}
\toprule
$\rho$ & Evaluation & Selector law & Total bias \\
\midrule
$0.0$ & $-0.0001\;(0.0003)$ & $+0.0000\;(0.0000)$ & $-0.0001\;(0.0003)$ \\
$0.1$ & $+0.0551\;(0.0004)$ & $-0.0160\;(0.0001)$ & $+0.0391\;(0.0004)$ \\
$0.4$ & $+0.1832\;(0.0005)$ & $-0.0478\;(0.0002)$ & $+0.1353\;(0.0005)$ \\
$0.8$ & $+0.3074\;(0.0006)$ & $-0.0720\;(0.0003)$ & $+0.2355\;(0.0006)$ \\
\bottomrule
\end{tabular}
\end{table}
The negative selector-law component partially offsets the evaluation bias in this experiment.

\section{Related work}
Classical unbiased functional estimation and U-statistics provide the independent-kernel construction and its minimum-variance symmetrization \citep{halmos1946unbiased,hoeffding1948class}. A $J$-score selection followed by one independent reward is an order-$(J+1)$ functional in each candidate law, which proves achievability. Our Bernoulli restriction proves that BoN requires this degree, and the Gaussian analysis quantifies the cost of exact unbiasedness near the boundary.

The subset weights extend the binary pass@$k$ counting identity \citep{chen2021codex}. Cross-fitting separates evaluation from fitting or selection \citep{chernozhukov2018double}; \citet{zrnic2024fithian} study inference after choosing a noisy winner. Our target averages the judge mean of a specified selection rule over its score noise, subsets, and ties. Population uncertainty uses complete prompt/pool units, consistent with the evaluation-design concerns in \citet{miller2024error}.

Reward overoptimization also includes systematic disagreement with external quality \citep{gao2023scaling}. Ensembles \citep{coste2023reward,eisenstein2023helping}, Soft BoN \citep{verdun2024softbon}, BoP / HedgeTune \citep{khalaf2025inference}, and inference-time pessimism \citep{huang2025inferencetimepessimism} modify scoring or selection. We study evaluation of the specified sample-mean selector under a stochastic scalar judge, as can arise with LLM judges \citep{zheng2023judging,zhuge2024judges} or sampled reward evaluation \citep{ziegler2019fine,ouyang2022instructgpt}.

\section{Limitations and conclusion}
\label{sec:discussion}
The unbiasedness guarantee requires independent scores with stable candidate-specific laws. Randomized call order, separate sessions, and replicated batches can expose temporal or shared-call effects. Independence remains an assumption.

Dependence can change both the winner law and the held-out conditional means; Section~\ref{sec:dependence} bounds their contributions. A fresh-score comparison on realized winners probes evaluation error. Selector-law bias remains unmeasured by that comparison. The experiments use synthetic scores; whether a real judge satisfies the model remains untested here. Judge means can differ from human quality, and pairwise or vector feedback requires a separate target.

The risk separation holds for fixed $M\ge N\ge2$ with common-variance Gaussian scores and $J=K-1$; its constants may depend on $M,N$. Boundary-regime rates for growing pools, unequal variances, and other score laws lie outside this analysis. The distribution-free bounded-score upper bound is uniform in pool size; the matching globally unbiased minimax rate assumes fixed $M,N,J$. The sharp constant is proved for two candidates. In that Gaussian model, fixed-$J$ minimum unbiased risk at equal means has order $K^{-1}$, and the unbiased minimax risk along $J=K-1$ has order $K^{-1/2}$.

From a fixed matrix, exact unbiased evaluation of a $J$-score BoN policy is possible precisely when $J<K$ over the stated distribution class. Holdout attains the optimal Gaussian unbiased rate at that boundary without knowing the variance. Allowing bias yields a faster minimax rate. Evaluation design therefore starts with the policy to be measured, then weighs acquisition cost and mean-squared error against the requirement of exact unbiasedness.

\bibliographystyle{plainnat}
\bibliography{refs}
\appendix
\section{Proof of Theorem~\ref{thm:frontier} (estimability frontier)}
\label{app:frontier}

\begin{proof}
Condition on the realized pool. For $J<K$, selection with $J$ columns and independent evaluation with the remaining columns is unbiased for $\Theta^{(J)}$; subset, tie, and cyclic averaging preserve its expectation (Theorem~\ref{thm:unbiased}).

For impossibility, first take $M=N=2$ and independent Bernoulli scores with means $(\mu_1,\mu_2)\in(0,1)^2$. If $c_m(r)=\sum_{k=1}^K r_{m,k}$, the expectation of any integrable statistic is
\[
\E_{\boldsymbol\mu}T
=\sum_{r\in\{0,1\}^{2\times K}}T(r)
\prod_{m=1}^2\mu_m^{c_m(r)}(1-\mu_m)^{K-c_m(r)}.
\]
This is a polynomial of degree at most $K$ in each mean. Integrability at any interior pair makes every coefficient $T(r)$ finite, since every binary matrix has positive probability there.

For the $J$-score selector, let $C'_m\sim\operatorname{Binom}(J,\mu_m)$ independently. Uniform ties give
\begin{equation}
\label{eq:thetaJ-binary}
\Theta^{(J)}_{\rm bin}
=\mu_1\Pr(C'_1>C'_2)+\mu_2\Pr(C'_1<C'_2)
+\frac{\mu_1+\mu_2}{2}\Pr(C'_1=C'_2).
\end{equation}
Each probability has degree at most $J$ in each mean, so the target has degree at most $J+1$ in each mean. At $\mu_2=0$, its polynomial extension is
\[
\Theta^{(J)}_{\rm bin}(\mu_1,0)
=\mu_1\left[1-\frac12(1-\mu_1)^J\right].
\]
The coefficient of $\mu_1^{J+1}$ is $(-1)^{J+1}/2\ne0$. Thus the target has degree exactly $J+1$ in $\mu_1$, and by symmetry in $\mu_2$. Equality on the open square would imply a polynomial identity everywhere, allowing evaluation at the boundary without an unbiasedness assumption there. Consequently no statistic of the $2K$ observed scores can be unbiased for this target when $J\ge K$.

For any fixed $M\ge N\ge2$, retain these two Bernoulli candidates and give the remaining $M-2$ candidates the constant score $L<0$. These laws satisfy Assumption~\ref{ass:indep}. Any active candidate beats every constant one. Counting subsets with zero, one, or two active candidates gives
\begin{equation}
\label{eq:embedding}
\Theta^{(J)}(M,N,\boldsymbol\mu_{\rm emb})
=\frac{\binom{M-2}{N}L+\binom{M-2}{N-1}(\mu_1+\mu_2)
+\binom{M-2}{N-2}\Theta^{(J)}_{\rm bin}(\mu_1,\mu_2)}{\binom MN},
\end{equation}
with $\binom ab=0$ for $b<0$ or $b>a$. The binary term's coefficient is positive, while the other terms have degree at most one. The target therefore retains degree $J+1$ in each active mean. The constant rows add no random observations, so every statistic's expectation still has degree at most $K$ in each active mean. This contradicts unbiasedness for every $J\ge K$ and proves the frontier for all $M\ge N\ge2$.
\end{proof}

Averaging out independent parameter-free auxiliary randomization gives a statistic of the matrix with the same expectation, so the obstruction also applies to randomized estimators. The same degree argument applies on any nondegenerate bounded score interval by affinely mapping the active Bernoulli scores and the lower constant into that interval.

\subsection{The frontier with known common Gaussian noise}
\label{app:gaussian-frontier}

\begin{proposition}[Known Gaussian two-candidate frontier]
\label{prop:gaussian-frontier}
Fix $M=N=2$, integers $K\ge2$ and $J\ge1$, and a known $\sigma>0$. Let the $2K$ observed scores be mutually independent with $R_{m,k}\sim\mathcal N(\mu_m,\sigma^2)$. The $J$-score selector uses fresh independent draws from these same laws. There is an estimator $T(R)$ that is integrable and satisfies $\E_{\mu_1,\mu_2}T=\Theta^{(J)}(\mu_1,\mu_2)$ for every $(\mu_1,\mu_2)\in\mathbb R^2$ if and only if $J<K$.
\end{proposition}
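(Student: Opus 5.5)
Achievability for $J<K$ is immediate from Theorem~\ref{thm:unbiased}: with Gaussian scores, Assumption~\ref{ass:indep} holds for every mean pair, and Holdout (or $T_{J,K}$ of Theorem~\ref{thm:gaussian-risk}) is integrable and unbiased. The work is the converse: for $J\ge K$, no integrable $T$ satisfies $\E_{\mu_1,\mu_2}T=\Theta^{(J)}$ for every $(\mu_1,\mu_2)\in\mathbb R^2$.

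The plan is to reduce to the complete sufficient statistic and then compare analytic structure in a transform variable. Given any integrable unbiased $T$, Rao--Blackwellize onto the row means $(\bar R_1,\bar R_2)$ to get an integrable $g(\bar R_1,\bar R_2)$ with the same expectation. Then
\[
\E g=\iint g(x_1,x_2)\,\phi_\tau(x_1-\mu_1)\phi_\tau(x_2-\mu_2)\,dx_1dx_2,\qquad \tau^2=\sigma^2/K,
\]
is a Gaussian (Weierstrass) transform. Change to $S=(\bar R_1+\bar R_2)/2$ and $D=\bar R_1-\bar R_2$, which are independent. Then $\E g$ equals $s+\tfrac d2\operatorname{erf}(\sqrt J d/(2\sigma))$ with $s=(\mu_1+\mu_2)/2$. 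Integrating out the $S$-direction, or fixing $s$ and using linearity, reduces the problem to one dimension. We need a function $h$ with $\int|h(y)|\phi_{\tau_D}(y-d)\,dy<\infty$ for all $d$ and
\[
\int h(y)\,\phi_{\tau_D}(y-d)\,dy=\psi_J(d):=\tfrac d2\operatorname{erf}\!\bigl(\sqrt J d/(2\sigma)\bigr),\qquad \tau_D^2=2\sigma^2/K.
\]
This uses the fact that $\E[g\mid D]$, averaged over $S$ at some fixed $s$, is a legitimate $h$. The linear part in $s$ is handled separately by $S$ itself.

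The key obstruction is the following. Integrability for all $d$ implies that $\int h(y)e^{-y^2/(2\tau_D^2)}e^{yd/\tau_D^2}dy$ converges for all real $d$. Hence the left side extends to an entire function of $d$, and it satisfies a growth bound along the imaginary axis. Concretely, $F(d)=e^{d^2/(2\tau_D^2)}\psi_J(d)\cdot\sqrt{2\pi}\tau_D$ must be the two-sided Laplace transform of $h(y)e^{-y^2/(2\tau_D^2)}$. For $d=it$ this gives
\[
|F(it)|\le \int |h(y)|e^{-y^2/(2\tau_D^2)}dy<\infty,
\]
so $F$ is bounded on the imaginary axis. Now compute the right side. $\operatorname{erf}(iu)=\tfrac{2i}{\sqrt\pi}\int_0^u e^{v^2}dv$ grows like $e^{u^2}/(\sqrt\pi u)$, with $u=\sqrt J t/(2\sigma)$. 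Thus $|\psi_J(it)|\asymp e^{Jt^2/(4\sigma^2)}$ up to polynomial factors, while $e^{-t^2/(2\tau_D^2)}=e^{-Kt^2/(4\sigma^2)}$. The product is bounded exactly when $J<K$; when $J\ge K$ it grows at least like $t^{\,0}\cdot e^{(J-K)t^2/(4\sigma^2)}$. Even at $J=K$ the polynomial factor (of order a constant times $t\cdot 1/t$) needs checking, so I will track the prefactor: $\tfrac{it}{2}\cdot\tfrac{2i}{\sqrt\pi}\cdot\tfrac{e^{u^2}}{2u}(1+o(1))$ gives a nonzero constant at $J=K$. So boundedness alone does not rule out $J=K$.

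The main obstacle is therefore the boundary case $J=K$, and the plan must be sharpened there. Instead of boundedness, use the Riemann--Lebesgue lemma. Since $h(y)e^{-y^2/(2\tau_D^2)}\in L^1$, its Fourier transform $F(it)$ tends to $0$ as $|t|\to\infty$. At $J=K$, the computed $F(it)$ tends to a nonzero constant $-\tfrac{\sigma\sqrt{\pi}\,\tau_D}{\sqrt{J}}\cdot(\text{const})\neq0$, which is a contradiction. For $J>K$, $F(it)$ is unbounded, which is also a contradiction. I expect the care to lie in justifying the analytic continuation identity: the transform of an $L^1$-against-all-Gaussians function is entire, and two entire functions agreeing on $\mathbb R$ agree on $i\mathbb R$. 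I also need to verify the exact nonvanishing constant in the $\operatorname{erf}$ asymptotics at $J=K$. Uniqueness of the identity theorem then completes the argument.
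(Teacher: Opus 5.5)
Your proposal is correct and follows essentially the same route as the paper. The paper likewise restricts to $\mu_1=-\mu_2=d/2$ and conditions on $D$, extends the transform of $t(x)\phi_\tau(x)$ to an entire function, identifies it with $(z/2)\operatorname{erf}(az)$ by the identity theorem, and applies Riemann--Lebesgue on the imaginary axis, where the $\operatorname{erfi}$ asymptotics give a nonzero limit at $J=K$ and divergence for $J>K$.

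There are two small slips. The product on the imaginary axis is bounded for $J\le K$, not only $J<K$, as you yourself then note. For general $J<K$, achievability should cite the cyclic estimator $\widehat\Theta_{J,K}$ or $T_{J,K}$ rather than Holdout, which only targets $\Theta^{(K-1)}$.
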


\begin{proof}
For $J<K$, select with $J$ columns and evaluate with an independent remaining column. For impossibility, restrict to $\mu_1=d/2$, $\mu_2=-d/2$, with $d\in\mathbb R$. Writing $a=\sqrt J/(2\sigma)$, the target on this submodel is
\[
\theta_J(d)=\frac d2\operatorname{erf}(ad),
\qquad
\operatorname{erf}(z)=\frac2{\sqrt\pi}\int_0^z e^{-u^2}\,du.
\]
Indeed, candidate 1 wins with probability $\Phi(d\sqrt{J/(2\sigma^2)})$.

The observed mean difference $D=\bar R_1-\bar R_2$ has law $\mathcal N(d,\tau^2)$, where $\tau^2=2\sigma^2/K$. The Gaussian residual vector after projecting the scores onto this difference is independent of $D$ and has a law independent of $d$. Conditioning $T$ on $D$ therefore gives a single parameter-independent function $t$ with $\E_d|t(D)|<\infty$ and $\E_dt(D)=\theta_J(d)$ for every real $d$.

Let $\phi_\tau$ be the density of $\mathcal N(0,\tau^2)$ and define
\[
H(z)=\int_{\mathbb R}t(x)\phi_\tau(x)e^{xz/\tau^2}\,dx,
\qquad
F(z)=e^{-z^2/(2\tau^2)}H(z).
\]
Both functions are entire. To justify this under integrability alone, on every compact set with $|\Re z|\le A$, the integrand is dominated by
$|t(x)|\phi_\tau(x)(e^{Ax/\tau^2}+e^{-Ax/\tau^2})$, whose integral is finite by integrability at means $\pm A$. Truncated integrals are entire and converge locally uniformly. For real $d$, the Gaussian density ratio gives $F(d)=\E_dt(D)=\theta_J(d)$, so the identity theorem gives $F(z)=(z/2)\operatorname{erf}(az)$ for all complex $z$.

Since $t\phi_\tau\in L^1(\mathbb R)$, the Riemann--Lebesgue lemma requires $H(iy)\to0$ as $y\to+\infty$. Writing $\operatorname{erfi}(u)=(2/\sqrt\pi)\int_0^u e^{v^2}\,dv$,
\[
H(iy)=-\frac y2e^{-y^2/(2\tau^2)}\operatorname{erfi}(ay)
\sim-\frac{1}{2a\sqrt\pi}
\exp\!\left(\frac{J-K}{4\sigma^2}y^2\right).
\]
Here $\operatorname{erfi}(u)\sim e^{u^2}/(\sqrt\pi u)$ follows, for example, by l'Hopital's rule. For $J=K$ the limit is $-\sigma/\sqrt{\pi K}\ne0$; for $J>K$ the magnitude diverges. Both contradict the required Fourier decay.
\end{proof}

The result applies to every measurable estimator with finite absolute expectation, including estimators with infinite variance. Averaging out independent parameter-free auxiliary randomization preserves expectation, so the proof also covers randomized estimators. The result concerns exact unbiasedness for known common Gaussian noise at $M=N=2$.

\section{Proof of Theorem~\ref{thm:gaussian-risk}}
\label{app:gaussian-risk}

Write $s=(\mu_1+\mu_2)/2$, $d=\mu_1-\mu_2$, and $a=\sqrt J/(2\sigma)$.
The independent variables $S\sim\mathcal N(s,\tau^2/4)$ and $D\sim\mathcal N(d,\tau^2)$ satisfy
$\Theta^{(J)}=s+(d/2)\operatorname{erf}(ad)$.
For $b=a/\sqrt{1-J/K}$, Gaussian convolution and integration by parts give
\begin{align*}
\E_d[\operatorname{erf}(bD)]
&=\operatorname{erf}\!\left(\frac{bd}{\sqrt{1+2b^2\tau^2}}\right)
=\operatorname{erf}(ad),\\
\E_d[D\operatorname{erf}(bD)]
&=d\operatorname{erf}(ad)
+\frac{2\tau^2b}{\sqrt\pi}\E_d[e^{-b^2D^2}].
\end{align*}
Substitution in Equation~\eqref{eq:gaussian-umvu} proves unbiasedness.
Also $|T_{J,K}|\le |S|+|D|/2+\tau^2b/\sqrt\pi$, so its variance is finite at every mean pair.

The row means $Y=(\bar R_1,\bar R_2)$ are complete and sufficient: their known-variance normal location family is a full exponential family with natural-parameter space $\mathbb R^2$.
For any statistic $U$ integrable and unbiased at every mean pair, sufficiency makes $\E[U\mid Y]$ parameter-independent, and completeness identifies it with $T_{J,K}$.
When $U$ has finite variance,
\[
\Var(U)=\Var(T_{J,K})+\E[\Var(U\mid Y)]\ge\Var(T_{J,K}).
\]
Equality requires $U=T_{J,K}$ almost surely. The normal laws at different means are mutually absolutely continuous, giving uniqueness up to common null sets. Independent parameter-free randomization cannot improve this bound.

For the tied-mean risk, put $X=D/\tau\sim\mathcal N(0,1)$, $q=b\tau$, and
\begin{gather*}
C=\E[e^{-2q^2X^2}]=(1+4q^2)^{-1/2},\\
B=\E[X\operatorname{erf}(qX)e^{-q^2X^2}],\qquad
Q=\E[\operatorname{erf}(qX)^2].
\end{gather*}
Gaussian integration by parts gives
\[
(1+2q^2)B=\frac{2qC}{\sqrt\pi},
\qquad
\frac{dQ}{dq}=\frac{4B}{\sqrt\pi}
=\frac{8q}{\pi(1+2q^2)\sqrt{1+4q^2}}.
\]
Since $Q(0)=0$, integration yields $Q=(2/\pi)\arcsin r$, where
$r=2q^2/(1+2q^2)=J/K$.
Applying $\E[X^2f(X)]=\E[f(X)]+\E[f''(X)]$ to $f(x)=\operatorname{erf}(qx)^2$ also gives
\[
A:=\E[X^2\operatorname{erf}(qX)^2]
=Q+\frac{8q^2C}{\pi}-\frac{8q^3B}{\sqrt\pi}
=Q+\frac{8q^2C}{\pi(1+2q^2)}.
\]
Thus the mean-zero term $T_{J,K}-S$ has second moment
\begin{align*}
\E[(T_{J,K}-S)^2]
&=\tau^2\left(\frac A4-\frac{qB}{\sqrt\pi}+\frac{q^2C}{\pi}\right)\\
&=\frac{\tau^2}{2\pi}\left(\arcsin r+\frac r{\sqrt{1-r^2}}\right),
\end{align*}
using $q^2C=r/(2\sqrt{1-r^2})$.
Add the independent variance $\Var(S)=\tau^2/4$ to obtain Equation~\eqref{eq:gaussian-tie-risk}.
At $J=K-1$,
\[
\frac r{\sqrt{1-r^2}}=\frac{K-1}{\sqrt{2K-1}}
=\sqrt{K/2}+O(K^{-1/2}),\qquad
\arcsin r=\frac\pi2+O(K^{-1/2}),
\]
so $\mathcal R_{K-1,K}=\sigma^2/(\pi\sqrt{2K})+\sigma^2/K+O(\sigma^2K^{-3/2})$.
For fixed $J$, $r\to0$ and $\mathcal R_{J,K}\sim\sigma^2/(2K)$.

Finally, $\max(\bar R_1,\bar R_2)=S+|D|/2$.
At $\mu_1=\mu_2=s$, its bias is $\sigma/\sqrt{\pi K}$ and its MSE against $\Theta^{(J)}=s$ is
\[
\E\!\left[\left(\max(\bar R_1,\bar R_2)-s\right)^2\right]
=\Var(S)+\frac14\E[D^2]
=\frac{\sigma^2}{K}.
\]
This pointwise comparison leaves the ordering at other mean pairs open.

\subsection{Unrestricted minimax rate}
\label{app:gaussian-minimax}
Fix $J=K-1$ in the two-candidate Gaussian model. Let $\Phi$ denote the standard normal CDF.
For unrestricted estimation, put $P=\max(\bar R_1,\bar R_2)$ and $m=\max(\mu_1,\mu_2)$.
The maximum is $1$-Lipschitz in Euclidean norm, so $\E[(P-m)^2]\le2\sigma^2/K$.
The difference between the oracle mean and the target is
\[
0\le m-\Theta^{(J)}
=|d|\Phi\!\left(-\frac{|d|\sqrt J}{\sqrt2\sigma}\right)
\le\frac{\sigma}{\sqrt{2eJ}}.
\]
Therefore, at $J=K-1$,
\[
\sup_{\mu_1,\mu_2}\E[(P-\Theta^{(K-1)})^2]
\le\frac{4\sigma^2}{K}+\frac{\sigma^2}{e(K-1)}
\le\left(4+\frac2e\right)\frac{\sigma^2}{K}.
\]
For the unrestricted lower bound, restrict to $\mu_1=\mu_2=s$, where the target is $s$ and the data are $2K$ independent $\mathcal N(s,\sigma^2)$ observations.
A prior $s\sim\mathcal N(0,L^2)$ gives posterior variance $(L^{-2}+2K/\sigma^2)^{-1}$, a lower bound on every estimator's worst-case risk.
Letting $L\to\infty$ gives $\sigma^2/(2K)$, completing the unrestricted minimax rate.
The row-mean maximum uses no $\sigma$, and these bounds hold at every scale. Dividing by $\sigma^2$ proves the unknown-variance normalized rate as well. These constants bound the unrestricted minimax risk.

\section{Proof of Proposition~\ref{prop:gaussian-holdout-risk}}
\label{app:gaussian-holdout-risk}

Write $s=(\mu_1+\mu_2)/2$, $d=\mu_1-\mu_2$, and
$\delta=d/(\sigma\sqrt2)$.
For each column define
\[
A_k=(R_{1,k}+R_{2,k})/2,\qquad
Z_k=(R_{1,k}-R_{2,k}-d)/(\sigma\sqrt2).
\]
The $Z_k$ are iid standard normals, the $A_k$ are iid
$\mathcal N(s,\sigma^2/2)$, and the two collections are independent.
Put
\[
S_k=\operatorname{sign}\!\left(\sum_{j\ne k}Z_j+(K-1)\delta\right),
\qquad
\bar S=K^{-1}\sum_k S_k,\qquad
\bar A=K^{-1}\sum_k A_k.
\]
The selection keys have continuous distributions, so the value assigned to $\operatorname{sign}(0)$ does not affect the result.
The winner's held-out score gives the exact representation
\[
\hbon=\bar A+\frac d2\bar S+V,\qquad
V=\frac{\sigma}{\sqrt2 K}\sum_{k=1}^K Z_kS_k.
\]
Since $Z_k$ is independent of $S_k$, $\E V=0$.
Every $S_k$ has mean
$m=2\Phi(\delta\sqrt{K-1})-1$, and
$\Theta^{(K-1)}=s+dm/2$.
Consequently $\hbon$ is unbiased at every mean pair.
Here $\Phi$ and $\phi$ denote the standard-normal CDF and density.

\paragraph{Exact noise second moment.}
For distinct indices, let $W=\sum_{j=3}^KZ_j$ and
$a=(K-1)\delta$.
Conditional on $W$, the two factors below are independent.
The identity $\E[Z\operatorname{sign}(Z+t)]=2\phi(t)$ gives
\begin{align*}
\E[Z_1Z_2S_1S_2]
&=\E\!\left[
  \E[Z_1\operatorname{sign}(Z_1+W+a)\mid W]\,
  \E[Z_2\operatorname{sign}(Z_2+W+a)\mid W]\right]\\
&=4\E[\phi(W+a)^2]
=\frac{2}{\pi\sqrt{2K-3}}
  \exp\!\left(-\frac{a^2}{2K-3}\right).
\end{align*}
For $K=2$, $W=0$ deterministically and the same formula holds.
The diagonal second moments are $\E[Z_k^2S_k^2]=1$, so
\begin{equation}
\label{eq:holdout-noise-second-moment}
\Var(V)=\E[V^2]
=\frac{\sigma^2}{2K}
+\frac{\sigma^2(K-1)}{\pi K\sqrt{2K-3}}
  \exp\!\left(-\frac{(K-1)^2\delta^2}{2K-3}\right).
\end{equation}
At $d=0$, the signal term vanishes.
Adding the independent variance $\Var(\bar A)=\sigma^2/(2K)$ proves Equation~\eqref{eq:gaussian-holdout-risk}.
In particular, $\mathcal H_K\sim\sigma^2/(\pi\sqrt{2K})$.

\paragraph{Uniform bound.}
Put $B=(d/2)(\bar S-m)$.
Jensen's inequality gives
$\Var(\bar S)\le K^{-1}\sum_k\E[(S_k-m)^2]=1-m^2$.
With $t=|d|\sqrt{K-1}/(\sigma\sqrt2)$,
\[
\Var(B)\le d^2\Phi(t)\Phi(-t)
\le d^2\Phi(-t)
\le\frac{d^2}{2}\exp\!\left(-\frac{d^2(K-1)}{4\sigma^2}\right)
\le\frac{2\sigma^2}{e(K-1)}.
\]
The penultimate inequality uses
$\Phi(-t)=\int_0^\infty\phi(t+u)\,du\le\tfrac12e^{-t^2/2}$ for $t\ge0$.
Define the deterministic bounds
\[
\alpha_K=\frac{\sigma^2}{2K}
+\frac{\sigma^2(K-1)}{\pi K\sqrt{2K-3}},
\qquad
\beta_K=\frac{2\sigma^2}{e(K-1)}.
\]
Equation~\eqref{eq:holdout-noise-second-moment} gives $\Var(V)\le\alpha_K$.
Cauchy--Schwarz applies even when the centered terms $B,V$ are dependent and gives
\begin{equation}
\label{eq:holdout-uniform-risk}
\sup_{\boldsymbol\mu}
\E_{\boldsymbol\mu,\sigma}[(\hbon-\Theta^{(K-1)})^2]
\le
\frac{\sigma^2}{2K}
+(\sqrt{\alpha_K}+\sqrt{\beta_K})^2.
\end{equation}
Since $\alpha_K\sim\sigma^2/(\pi\sqrt{2K})$ and
$\beta_K=O(\sigma^2/K)$, the right-hand side is
$\sigma^2/(\pi\sqrt{2K})+O(\sigma^2K^{-3/4})$.
The exact tied risk supplies the matching lower bound for the supremum.
Finally, Theorem~\ref{thm:gaussian-risk} lower-bounds every globally unbiased estimator's worst-case MSE by its minimum possible tied-mean MSE
$\mathcal R_{K-1,K}\sim\sigma^2/(\pi\sqrt{2K})$.
Diagonal Holdout belongs to this unbiased class, so the bounds prove the stated minimax asymptotic.
The argument does not require the worst-case means to be tied.

\paragraph{Unknown common variance.}
The estimator $\hbon$ uses no value or estimate of $\sigma$ and is unbiased for every pair $(\boldsymbol\mu,\sigma)$.
If $\mathcal U_K^{\rm unknown}$ denotes this globally unbiased class with $\sigma$ unknown, the scale-normalized statement is
\[
\inf_{U\in\mathcal U_K^{\rm unknown}}
\sup_{\boldsymbol\mu\in\mathbb R^2,\,\sigma>0}
\frac{\E_{\boldsymbol\mu,\sigma}
[(U-\Theta^{(K-1)}(\boldsymbol\mu,\sigma))^2]}{\sigma^2}
\sim\frac1{\pi\sqrt{2K}},
\]
attained to first order by $\hbon$.
For the lower bound restrict to $\sigma=1$; the uniform upper bound above applies at every scale.
An unnormalized supremum over all $\sigma>0$ is infinite.
If bias is allowed, the observed row-mean maximum also uses no $\sigma$ and the bounds in Appendix~\ref{app:gaussian-minimax} continue to give normalized minimax rate $K^{-1}$.

\section{Proof of Theorem~\ref{thm:general-gaussian-risk}}
\label{app:general-gaussian-risk}

For this appendix put $J=K-1$ and define
\[
c=\frac{N(N-1)}{M(M-1)},\qquad
a=\frac{N(M-N)}{M(M-1)},\qquad b=1-2a-c.
\]
These are, respectively, the probabilities that a uniform size-$N$ subset
contains both of two designated candidates, contains the first but not the
second, and contains neither. The second but not the first also has
probability $a$. Equivalently,
$c=\binom{M-2}{N-2}/\binom MN$,
$a=\binom{M-2}{N-1}/\binom MN$, and
$b=\binom{M-2}{N}/\binom MN$, with out-of-range binomial coefficients zero.
Let $\mathcal R_{J,K}$ be the two-candidate Gaussian minimum unbiased tied-mean
risk in Equation~\eqref{eq:gaussian-tie-risk}. Set
\begin{align*}
B_{M,N,K}
&=\frac{(c+2a)^2\sigma^2}{2K}
 +c^2\left(\mathcal R_{K-1,K}-\frac{\sigma^2}{2K}\right),\\
\alpha_{N,K}
&=\frac{\sigma^2}{K}
 +\frac{N^2(N-1)\sigma^2(K-1)}{4\pi K\sqrt{2K-3}},
\qquad
\beta_{N,K}=\frac{(N-1)\sigma^2}{K-1}.
\end{align*}
We prove the finite-$K$ inequalities
\begin{equation}
\label{eq:general-gaussian-unbiased-bounds}
B_{M,N,K}
\le\inf_{U\in\mathcal U_K(\sigma)}\sup_{\boldsymbol\mu}L_K(U;\boldsymbol\mu,\sigma)
\le\sup_{\boldsymbol\mu}L_K(\hbon;\boldsymbol\mu,\sigma)
\le(\sqrt{\alpha_{N,K}}+\sqrt{\beta_{N,K}})^2.
\end{equation}
Here $B_{M,N,K}\ge c^2\mathcal R_{K-1,K}$ and
$c>0$. The lower bound is therefore asymptotic to
$c^2\sigma^2/(\pi\sqrt{2K})$ up to lower-order terms, while the upper bound
is $O_N(\sigma^2/\sqrt K)$.

\paragraph{Uniform upper bound for actual Holdout.}
First take $M=N$. Let $W_k$ be the candidate with the largest leave-one-out mean,
write $\varepsilon_{m,k}=R_{m,k}-\mu_m$, and choose a fixed candidate
$m_\star$ with largest mean $\mu_\star$. Gaussian keys tie with probability zero.
With $\Delta_m=\mu_\star-\mu_m$, write
\[
\hbon=\mu_\star+A-D,\qquad
A=\frac1K\sum_k\varepsilon_{W_k,k},\qquad
D=\frac1K\sum_k\Delta_{W_k}.
\]
Every held-out noise term has mean zero and variance $\sigma^2$ because
its selection is independent of the held-out column.
For the signal term, let $e_m$ denote candidate $m$'s centered selection
mean in one block. If $W$ wins, then
$0\le\Delta_W\le(e_W-e_{m_\star})_+$, and hence
\[
\E[\Delta_W^2]
\le\sum_{m\ne m_\star}\E[(e_m-e_{m_\star})_+^2]
=\frac{(N-1)\sigma^2}{K-1}.
\]
The equality uses centered normal differences of variance
$2\sigma^2/(K-1)$. Jensen's inequality across the $K$ blocks gives
$\Var(D)\le\E[D^2]\le\beta_{N,K}$.

To bound the cross-moments of the held-out noise, condition on
$T_m=\sum_{k=3}^K R_{m,k}$ and put $\nu_m=T_m+\mu_m$.
For $K=2$, the empty sum is zero. Let $Z_m=\nu_m+\varepsilon_m$, where the
$\varepsilon_m$ are independent $\mathcal N(0,\sigma^2)$, and define
\[
C_{mn}(\boldsymbol\nu)
=\E\!\left[\varepsilon_m\,
\mathbf1\{n=\arg\max_j Z_j\}\,\middle|\,T\right].
\]
Conditional independence of columns 1 and 2 implies
\[
\E[\varepsilon_{W_1,1}\varepsilon_{W_2,2}\mid T]
=\sum_{m,n}C_{mn}C_{nm}.
\]
Let $f_m,F_m$ be the density and CDF of $\mathcal N(\nu_m,\sigma^2)$.
For $m\ne n$, conditioning on the winning value $Z_n=x$ and integrating
the truncated centered normal gives
\[
C_{mn}
=-\sigma^2\int_{\mathbb R} f_m(x)f_n(x)
           \prod_{\ell\ne m,n}F_\ell(x)\,dx.
\]
This is symmetric in $m,n$. Since each row of $C$ sums to zero,
$C_{mm}=-\sum_{n\ne m}C_{mn}$. Cauchy--Schwarz applied to these row sums yields
\[
\sum_{m,n}C_{mn}C_{nm}=\|C\|_F^2
\le N\sum_{m\ne n}C_{mn}^2.
\]
Dropping the CDF factors and integrating the product of two normal densities,
\[
|C_{mn}|\le\frac{\sigma}{2\sqrt\pi}
\exp\!\left(-\frac{(\nu_m-\nu_n)^2}{4\sigma^2}\right).
\]
The difference $\nu_m-\nu_n$ is normal with mean
$(K-1)(\mu_m-\mu_n)$ and variance $2(K-2)\sigma^2$. Thus
\[
\E\!\left[e^{-(\nu_m-\nu_n)^2/(2\sigma^2)}\right]
=\frac{\exp\!\left[-\frac{(K-1)^2(\mu_m-\mu_n)^2}
 {2\sigma^2(2K-3)}\right]}{\sqrt{2K-3}}
\le\frac1{\sqrt{2K-3}}.
\]
There are $N(N-1)$ off-diagonal entries, so
\[
\E[\varepsilon_{W_1,1}\varepsilon_{W_2,2}]
\le\frac{N^2(N-1)\sigma^2}{4\pi\sqrt{2K-3}},
\qquad \Var(A)\le\alpha_{N,K}.
\]
Using unbiasedness and Cauchy--Schwarz with possibly dependent $A$ and $D$ gives
\[
L_K(\hbon;\boldsymbol\mu,\sigma)=\Var(A-D)
\le(\sqrt{\alpha_{N,K}}+\sqrt{\beta_{N,K}})^2.
\]

For $M>N$, exact subset averaging gives
$\hbon=\E_S[H_S\mid R]$ and
$\Theta^{(K-1)}=\E_S[\Theta_S^{(K-1)}]$.
Jensen's inequality bounds the squared error of their difference by
$\E_S[(H_S-\Theta_S^{(K-1)})^2\mid R]$.
Taking expectation over the scores and applying the preceding uniform
$N$-candidate bound proves the same upper bound for every $M\ge N$.

\paragraph{A lower bound from two tied best candidates.}
For $M=2$, the two-candidate UMVU theorem directly gives the stated lower
bound. Suppose $M>2$ and let
\[
f_t(x,y)=\Theta^{(K-1)}(x,y,-t,\ldots,-t),\qquad t>0.
\]
Write $\Theta_2^{(K-1)}(x,y)$ for the corresponding two-candidate policy value.
For every fixed $(x,y)$,
\begin{equation}
\label{eq:general-gaussian-target-limit}
f_t(x,y)+bt\longrightarrow c\Theta_2^{(K-1)}(x,y)+a(x+y).
\end{equation}
Indeed, subsets containing neither designated candidate have value exactly
$-t$. Subsets containing one have limiting value $x$ or $y$; subsets
containing both have the two-candidate limiting value. The error in each
case is bounded by $(t+|x|+|y|)$ times the probability that an inactive
candidate beats an active one. A union bound and Gaussian pairwise tail
bound make this $O_{x,y,K,M,N}(t e^{-q t^2})$ for some $q>0$ as $t\to\infty$.
This proves the target limit, including its potentially diverging constant.

Fix any $U\in\mathcal U_K(\sigma)$. We show
\begin{equation}
\label{eq:general-gaussian-tied-top-liminf}
\liminf_{t\to\infty}
L_K(U;(0,0,-t,\ldots,-t),\sigma)\ge B_{M,N,K}.
\end{equation}
If the left side is infinite there is nothing to prove. Otherwise choose
$t_i\to\infty$ along a sequence realizing a finite limit inferior.
Let $X=(\bar R_1,\bar R_2)$, and Rao--Blackwellize at inactive means $-t_i$:
\[
v_i(X)=\E_{(0,0,-t_i,\ldots,-t_i),\sigma}[U\mid X]-f_{t_i}(0,0).
\]
Gaussian residuals in the first two rows are independent of their row means
and have laws independent of $(x,y)$; the remaining rows are independent of
the first two. Therefore this conditional-expectation function satisfies
\[
\E_{x,y}[v_i(X)]=f_{t_i}(x,y)-f_{t_i}(0,0)
\quad\hbox{for every }(x,y)\in\mathbb R^2.
\]
In particular, $\E_{0,0}v_i=0$ and
$\|v_i\|_{L^2(P_0)}^2\le
L_K(U;(0,0,-t_i,\ldots,-t_i),\sigma)$, where
$P_0=\mathcal N(0,\sigma^2/K)^{\otimes2}$.
The sequence is bounded in this Hilbert space, so a subsequence converges
weakly to some $v\in L^2(P_0)$.

For every fixed $(x,y)$, the likelihood ratio of
$\mathcal N((x,y),\sigma^2 I_2/K)$ to $P_0$ is
\[
\ell_{x,y}(X)=\exp\!\left\{
\frac K{\sigma^2}(xX_1+yX_2)-\frac K{2\sigma^2}(x^2+y^2)
\right\}\in L^2(P_0).
\]
Indeed, $\E_0[\ell_{x,y}^2]=e^{K(x^2+y^2)/\sigma^2}<\infty$.
Weak convergence therefore passes the expectation against this likelihood
ratio to the limit. Since $\Theta_2^{(K-1)}(0,0)=0$, subtracting the limit
at $(0,0)$ in Equation~\eqref{eq:general-gaussian-target-limit} gives
\[
\E_{x,y}v(X)=c\Theta_2^{(K-1)}(x,y)+a(x+y).
\]
The limit is integrable at every $(x,y)$ by Cauchy--Schwarz with
$\ell_{x,y}$. Completeness of the two normal row means implies
\[
v(X)=cT_{K-1,K}(X)+a(X_1+X_2)
\quad P_0\hbox{-almost surely},
\]
where $T_{K-1,K}$ is the two-candidate Gaussian UMVU estimator from
Equation~\eqref{eq:gaussian-umvu}.
At $(x,y)=(0,0)$, write $S=(X_1+X_2)/2$ and $D=X_1-X_2$.
They are independent, $\Var(S)=\sigma^2/(2K)$, and
$T_{K-1,K}=S+g(D)$ for a centered $g(D)$ of variance
$\mathcal R_{K-1,K}-\sigma^2/(2K)$. Consequently
$\|v\|_{L^2(P_0)}^2=B_{M,N,K}$.
Weak lower semicontinuity of the norm proves
Equation~\eqref{eq:general-gaussian-tied-top-liminf}.
Taking the supremum over mean vectors and then the infimum over $U$ proves
the lower bound in Equation~\eqref{eq:general-gaussian-unbiased-bounds}.

\paragraph{Allowing bias.}
Let $P=\E_S[\max_{m\in S}\bar R_m]$ be the exact subset average of the
observed row-mean maximum, and put
$\Theta_\infty^\star=\E_S[\max_{m\in S}\mu_m]$.
The maximum is 1-Lipschitz in Euclidean norm. Jensen's inequality over subsets
therefore gives
\[
\E[(P-\Theta_\infty^\star)^2]\le N\sigma^2/K.
\]
For a fixed subset and a fixed maximal-mean candidate $m_\star$, the
selector regret is at most $\max_m e_m-e_{m_\star}$.
Since $\E[e_{m_\star}]=0$, its expectation is bounded by
$\E[\max_m e_m-e_{m_\star}]=\E[\max_m e_m]$.
The normal maximum bound gives
\[
0\le\Theta_\infty^\star-\Theta^{(K-1)}
\le\sigma\sqrt{\frac{2\log N}{K-1}}.
\]
It follows that
\begin{equation}
\label{eq:general-gaussian-unrestricted-bounds}
\frac{\sigma^2}{MK}
\le\inf_U\sup_{\boldsymbol\mu}L_K(U;\boldsymbol\mu,\sigma)
\le\sup_{\boldsymbol\mu}L_K(P;\boldsymbol\mu,\sigma)
\le\frac{2N\sigma^2}{K}+\frac{4\sigma^2\log N}{K-1}.
\end{equation}
For the lower bound restrict all $M$ means to $s$, where the target is $s$
and the observations are $MK$ independent $\mathcal N(s,\sigma^2)$ variables.
A $\mathcal N(0,L^2)$ prior has posterior variance
$(L^{-2}+MK/\sigma^2)^{-1}$; letting $L\to\infty$ gives the displayed bound.
The infimum here is over all estimators, with bias permitted.

\paragraph{Unknown common variance.}
Both $\hbon$ and $P$ use no value or estimate of $\sigma$.
Their upper bounds hold at every scale, and $\hbon$ is unbiased for all
$(\boldsymbol\mu,\sigma)$. For normalized loss, the lower bounds restrict
the supremum to $\sigma=1$. Thus the same $K^{-1/2}$ unbiased and $K^{-1}$
unrestricted orders hold for the unknown-variance problem, with the
supremum over all mean vectors and all positive variances.
An unnormalized supremum over unbounded $\sigma$ is infinite.

\section{Subset weights, ties, and computation}
\label{app:computation}
For distinct selection keys in ascending order, rank $i$ wins exactly $\binom{i-1}{N-1}$ size-$N$ subsets: include that candidate and choose the other $N-1$ from lower ranks. Dividing by $\binom MN$ gives $w_i$. Consequently $\sum_iw_i a_i=\E_S[\max_{m\in S}a_m]$ for sorted $a_i$. On binary rewards with $c$ successes, this is $1-\binom{M-c}{N}/\binom MN$, the pass@$k$ identity with $k=N$ \citep{chen2021codex}.

For a tied block occupying ranks $\mathcal B$, a uniform random ordering assigns each candidate each rank with probability $1/|\mathcal B|$, giving expected weight $|\mathcal B|^{-1}\sum_{i\in\mathcal B}w_i$. This ordering makes each tied maximum in every subset equally likely to win, so averaging the weights computes the exact subset average with uniform ties, conditional on the observed matrix. Candidates in a tied block need not have equal means. Stable sorting without this averaging would favor the index order.

Rolling selection sums update by subtracting one column and adding the next; evaluation sums equal row totals minus selection sums. The updates cost $O(MK)$ and the $K$ sorts cost $O(MK\log M)$. For stable weight computation, start at $w_M=N/M$ and recurse downward using $w_{i-1}=w_i(i-N)/(i-1)$ for $i=M,\ldots,N+1$; lower weights are zero. This avoids evaluating large binomial coefficients.

\paragraph{What averaging removes.}
Let $Z_S(R)$ be the cyclic estimate on one uniformly sampled size-$N$ subset, with tied scores averaged. Then $\E_S[Z_S\mid R]=\widehat\Theta_{J,K}(R)$. For $L$ independent subset draws with replacement, total variance gives
\[
\Var\!\left(\frac1L\sum_{\ell=1}^L Z_{S_\ell}\right)
=\Var(\widehat\Theta_{J,K})+\frac1L\E[\Var_S(Z_S\mid R)].
\]
This comparison assumes finite variances and an available full matrix; acquiring only a subset costs fewer calls. Similarly, cyclic averaging is the conditional average of a fixed split after a uniform common column rotation. The matrix law is invariant under that rotation, so cyclic averaging has no greater variance than the fixed split. Averaging all $\binom KJ$ common-column selection sets further averages over column permutations and has no greater variance than the cyclic scheme. At $J=K-1$ the two averages coincide. Classical symmetrization gives these variance inequalities for a fixed observation design.

For a fixed $J<K$, averaging every $\binom KJ$ selection split can reduce
variance at increased computation cost. An intermediate option averages the
cyclic estimator over $L$ independent, uniformly sampled common column
permutations, drawn independently of the scores. Let $C$ denote the cyclic
estimate, $F$ the complete common-column split average, and $C_L$ this
randomized average. Under Assumption~\ref{ass:indep} and finite variance,
all three are unbiased for the same $\Theta^{(J)}$, and
\begin{equation}
\label{eq:permutation-interpolation}
\Var(C_L\mid\mathcal F)
=\Var(F\mid\mathcal F)
+\frac{\Var(C\mid\mathcal F)-\Var(F\mid\mathcal F)}{L}.
\end{equation}
Thus $L=2$ removes half the variance gap between cyclic and complete
splitting. The cost is $O(LMK\log M)$ operations on the existing matrix,
with no additional judge calls. At $J=1$ or $J=K-1$, cyclic and complete
splitting already coincide. Appendix~\ref{app:permutation-interpolation}
proves the identity. This comparison uses a fixed observation design.
\paragraph{Common shifts.}
A shared additive shift $b_k$ in column $k$ changes all selection keys in a block by the same amount. Ranks and ties are preserved. Every column belongs to $K-J$ evaluation complements, giving the deterministic identity
\[
\widehat\Theta_{J,K}(R+\mathbf1_Mb^\top)
=\widehat\Theta_{J,K}(R)+K^{-1}\sum_kb_k.
\]
If the centered matrix satisfies Assumption~\ref{ass:indep}, Theorem~\ref{thm:unbiased} applies to its score laws. The estimate for the shifted matrix is the centered estimate plus the mean column shift.

\paragraph{Averaging random column permutations.}
\label{app:permutation-interpolation}
Fix the realized pool $\mathcal F$, $1\le J<K$, and a finite-variance cyclic
estimate $C(R)=\widehat\Theta_{J,K}(R)$. For a selection set $I$ of size $J$,
write $G_I(R)$ for its exact subset-and-tie average evaluated with the
complementary columns, as in Equation~\eqref{eq:fixed-J-main}. Define
\[
F(R)=\binom KJ^{-1}\sum_{I:\,|I|=J}G_I(R).
\]
Let $\Pi$ be a uniformly random permutation of the $K$ columns, independent
of the scores, acting on every candidate's row in the same way. Each cyclic
selection block in $C(\Pi R)$ becomes a uniform size-$J$ selection set in
the original matrix, with its evaluation complement unchanged as a set.
Consequently
\[
\E_\Pi[C(\Pi R)\mid R,\mathcal F]=F(R).
\]
For iid copies $\Pi_1,\ldots,\Pi_L$, independent of the scores, put
$C_L=L^{-1}\sum_{\ell=1}^L C(\Pi_\ell R)$. Given $R,\mathcal F$, these
summands are independent, so total variance gives
\[
\Var(C_L\mid\mathcal F)
=\Var(F\mid\mathcal F)
+\frac1L\E\!\left[
  \Var_\Pi(C(\Pi R)\mid R,\mathcal F)\,\middle|\,\mathcal F\right].
\]
Under Assumption~\ref{ass:indep}, the score-matrix law is invariant under
common column permutations. Thus $C(\Pi R)$ has the same conditional law
as $C(R)$, and the case $L=1$ identifies the second term's numerator as
\mbox{$\Var(C\mid\mathcal F)-\Var(F\mid\mathcal F)$}. This proves
Equation~\eqref{eq:permutation-interpolation}. Unbiasedness follows from
Theorem~\ref{thm:unbiased} and permutation invariance. The identity uses complete common-column splitting, with each permutation
applied identically to every candidate's row.
% Candidate 7 appendix proofs. No frozen candidate is modified.

\section{Selector depth and bounded-score risk}
\label{app:c7-depth-risk}

\paragraph{Gaussian depth regimes.}
\begin{proof}
Equation~\eqref{eq:gaussian-tie-risk} gives the exact identity.
Near zero,
$\arcsin r+r/\sqrt{1-r^2}=2r+O(r^3)$, proving the fixed-$J$ expansion.
Continuity of $C$ gives the second regime, including $r_0=0$.
For $\delta=s_K/K\downarrow0$,
\[
C(1-\delta)=\frac1{\pi\sqrt{2\delta}}+1+O(\sqrt\delta),
\]
which gives the third regime. More precisely,
$\mathcal R_{K-s_K,K}=\sigma^2/(\pi\sqrt{2Ks_K})+
\sigma^2/K+O(\sigma^2\sqrt{s_K}/K^{3/2})$ when $s_K=o(K)$.
\end{proof}

\paragraph{Bounded fixed-depth evaluation.}
\begin{proof}
Condition throughout on the realized pool. The column vectors
$Z_k=(R_{1,k},\ldots,R_{M,k})$ are iid.
Write $d=J+1$. On $d$ columns define the symmetric kernel
\[
h(Z_1,\ldots,Z_d)=\frac1d\sum_{\ell=1}^d\sum_{m=1}^M
 q_m(Z_{-\ell})R_{m,\ell},
\]
where $q_m(Z_{-\ell})$ is the size-$N$ subset and tie averaged winning
probability when the other $J$ columns select. Independent evaluation
gives $\E h=\Theta^{(J)}$, and the simplex weights give
$a\le h\le a+B$, hence $\Var(h)\le B^2/4$.
The order-$d$ U-statistic that averages $h$ over all $d$-column sets equals
$\widehat\Theta^{\rm all}_{J,K}$: every selection-set/evaluation-column pair
has the same weight because
$d\binom Kd=(K-J)\binom KJ$.

For completeness, let $\zeta_r\ge0$ be the variance of the order-$r$
canonical projection in the Hoeffding decomposition of $h$. Orthogonality
gives
\[
\Var(h)=\sum_{r=1}^d\binom dr\zeta_r,\qquad
\Var(\widehat\Theta^{\rm all}_{J,K})
=\sum_{r=1}^d\frac{\binom dr^2}{\binom Kr}\zeta_r
\le\frac dK\Var(h),
\]
since $\binom dr/\binom Kr\le d/K$ for $1\le r\le d\le K$.
This proves the first bound.

For the cyclic estimator, replacing one column changes the $J$ blocks
that use it for selection by at most $B$ each. In each of the other
$K-J$ blocks, the winning weights are unchanged and its contribution to
the evaluation mean changes by at most $B/(K-J)$. The complete cyclic
average therefore changes by at most $(J+1)B/K$.
The independent-coordinate variance bound
$\Var f\le\sum_k\E[\Var(f\mid Z_{-k})]$ and the conditional range
bound $\Var(f\mid Z_{-k})\le ((J+1)B/K)^2/4$ give the second displayed
bound, combined with $\Var(\widehat\Theta_{J,K})\le B^2/4$.
Unbiasedness turns both variance bounds into MSE bounds.

For the minimax lower bound, let $U$ be integrable and unbiased for
$\Theta^{(J)}$ under every independent, stable collection of laws supported
on $[a,a+B]$. Restrict every candidate law to
$a+B\operatorname{Bernoulli}(p)$, $0<p<1$.
The target is then $a+Bp$ for every $J,N$.
Let $T$ count successes among all $MK$ observations. On this finite
sample space, differentiating unbiasedness gives
$\operatorname{Cov}_p(U,T)=Bp(1-p)$.
Since $\Var_p(T)=MKp(1-p)$, Cauchy--Schwarz at $p=1/2$ yields
$\Var_{1/2}(U)\ge B^2/(4MK)$.
Infinite-variance estimators already satisfy the bound; independent
auxiliary randomization can first be averaged out. Taking the supremum
over the bounded model and infimum over its globally unbiased class
completes the proof.
\end{proof}

\section{Numerical protocols and reproducibility}
\label{app:numerical}
The supplement contains the score-generating scripts, recorded seeds or primitive arrays, per-replication outputs, exact-target checks, and figure data, along with the protocols and their chronology. The finite-$K$ Gaussian grid was fixed after earlier numerical results were available and before generating its new draws. The one-score and dependence comparisons reuse existing matrices. All reported studies use synthetic scores.

\paragraph{Exact Gaussian risk.}
The four curves in Figure~\ref{fig:gaussian-risk} use Equations~\ref{eq:gaussian-tie-risk} and~\ref{eq:gaussian-holdout-risk}, with $1/K$ for the biased row-mean maximum. The \texttt{gaussian-diagonal-risk} package includes exact figure values, plotting code, and deterministic checks using Gaussian quadrature and direct matrix evaluations. For numerical evaluation of Equation~\ref{eq:gaussian-umvu}, set $q=\sqrt{J/[2(K-J)]}$ and $z=D/\tau$, giving correction $\tau q e^{-q^2z^2}/\sqrt\pi$. Computing the positive integer $K-J$ directly avoids cancellation in $1-J/K$.

\paragraph{Discrete-score check.}
Five fixed Bernoulli pools use
\[
(M,N,K)\in\{(2,2,2),(2,2,4),(4,2,4),(4,4,4),(8,4,8)\}.
\]
Means are drawn once per pool from $\mathrm{Unif}(0.2,0.8)^M$ with seed \texttt{2026092203}; each pool has $200{,}000$ independent matrices from score seed \texttt{2026092204}. Finite binomial sums with uniform ties compute both $\Theta^{(K-1)}$ and $\Theta^{(K)}$. The mean-scored selector evaluates each matrix's realized selections using the selected candidates' known means; its expectation is the integrated policy target. In each cell, the paired mean difference (Holdout minus this selector) is within $1.70$ times its Monte Carlo standard error of zero. The \texttt{bernoulli-reproduction} directory contains every cell and output.

\paragraph{Gaussian reuse controls.}
At $M=16,N=4$, four cells cross $K\in\{4,8\}$ with score SD $\sigma\in\{0.2,0.8\}$. Each $K$ has $40{,}000$ iid $\mathrm{Beta}(2,2)^{16}$ mean pools and independent standard-normal score shocks, shared across its noise-scale cells. Holdout and reuse use identical leave-one-out selections, subset averaging, and ties. Each difference from the mean-scored selector is paired by pool and matrix; its Monte Carlo SE is the sample SD of those differences divided by $\sqrt{40{,}000}$. Holdout differences are within $1.81$ such SEs of zero; reuse differences range from $+0.0243$ to $+0.4277$. The \texttt{compact-synthetic} package retains the full protocols, primitives, and outputs, including separate dependence studies not used to claim real-call independence.

\paragraph{Fixed one-score comparison.}
Table~\ref{tab:one-score} reanalyzes the saved Gaussian matrices in the \texttt{adaptive-budget} study: $M=N=8$, $20{,}000$ independent pools per $K\in\{4,8\}$, iid $\mathrm{Unif}(0.2,0.8)$ means, and score SD $0.6$. Cyclic split evaluation selects with each column and evaluates with the other $K-1$ scores, then averages over columns. The fixed split uses the first column's winner; reuse averages the winning selection scores. All use the same one-score selection rule and are compared with its value; reuse is generally biased. Gaussian quadrature integrates each pool's $\Theta^{(1)}$, with convergence checks in the \texttt{one-score} package. Its paired-bootstrap intervals, saved matrices, and numerical verification distinguish pool-specific RMSE from the total population variance used in cost comparisons. 

\paragraph{Prespecified finite-$K$ risks for larger pools.}
The prespecified grid uses $(M,N)\in\{(4,2),(8,4),(8,8)\}$, $K\in\{4,16,64\}$, common Gaussian score variance one, and three fixed mean vectors: all zero; $(0,0,-1,-2,\ldots)$; and $(0,-1,-2,\ldots)$.

These $27$ cells were fixed before generating draws for the supplement's \texttt{finite-gaussian-risk} package. One NumPy PCG64 stream with seed words $[2026092307,1]$ generates a saved $50{,}000\times8\times64$ array of independent standard normals. Smaller $M,K$ use prefixes of this array, and all fixed mean configurations use the same normals. The cells are coupled through the shared normals. Each cell contains $50{,}000$ independent matrices. Exact subset weights and tie averaging compute Holdout and the row-mean maximum comparator.

For fresh selection means of variance $1/(K-1)$, candidate $m$'s win probability is
\[
p_m=\binom MN^{-1}\int f_m(x)\,e_{N-1}\!\left(\{F_\ell(x):\ell\ne m\}\right)dx,
\qquad \Theta^{(K-1)}=\sum_m\mu_mp_m,
\]
where $e_{N-1}$ is the elementary symmetric polynomial and $f_m,F_m$ are the corresponding normal density and CDF. Gauss--Hermite orders $64,128,256$ check convergence; the last two targets differ by less than $10^{-12}$ in every cell. A separate implementation explicitly enumerates subsets and applies adaptive one-dimensional integration, agreeing within $2\times10^{-11}$. Winning probabilities are used without renormalization.

For error $e=\widehat\Theta-\Theta$, reported MSE is the mean of $e^2$ and its MCSE is $\operatorname{sd}(e^2)/\sqrt{50{,}000}$, using sample SD. The paired difference uses $\operatorname{sd}(e_H^2-e_P^2)/\sqrt{50{,}000}$; bias uses $\operatorname{sd}(e)/\sqrt{50{,}000}$. These quantify Monte Carlo error conditional on each fixed mean vector. The full grid, primitive normals, estimates, protocol freeze, and independent recomputation are supplied.

\end{document}